\newtheorem{theorem}{Theorem}
\newtheorem{definition}{Definition}%
\newtheorem{lemma}{Lemma}
\newtheorem*{theorem-restate}{Theorem}
\newtheorem*{lemma-restate}{Lemma}
\newcommand\norm[1]{\left\lVert#1\right\rVert}
\begin{document}

\title[Article Title]{Dynamic Policy Fusion for User Alignment Without Re-Interaction}


\author*[1]{\fnm{Ajsal Shereef} \sur{Palattuparambil}}\email{a.palattuparambil@deakin.edu.au}

\author[2]{\fnm{Thommen George} \sur{Karimpanal}}\email{thommen.karimpanalgeorge@deakin.edu.au}
\equalcont{These authors contributed equally to this work.}

\author[1]{\fnm{Santu} \sur{Rana}}\email{santu.rana@deakin.edu.au}
\equalcont{These authors contributed equally to this work.}

\affil[1]{\orgdiv{A2I2}, \orgname{Deakin University}, \orgaddress{\street{Waurn Ponds}, \city{Geelong}, \postcode{3216}, \state{Victoria}, \country{Australia}}}

\affil[2]{\orgdiv{School of IT}, \orgname{Deakin University}, \orgaddress{\street{Waurn Ponds}, \city{Geelong}, \postcode{3216}, \state{Victoria}, \country{Australia}}}


\abstract{
  Deep Reinforcement Learning policies, although optimal in terms of task rewards, may not align with the personal preferences of human users. To ensure this alignment, a na\"ive solution would be to retrain the agent using a reward function that encodes the user's specific preferences. However, such a reward function is typically not readily available, and as such, retraining the agent from scratch can be prohibitively expensive. We propose a more practical approach - modify the already trained policy to accommodate the user-specific needs with the help of human feedback. To this end, we infer the user's intent through trajectory-level feedback and combine it with the trained task policy via a theoretically grounded dynamic policy fusion approach. Our approach collects human feedback on the same trajectories used to learn the task policy. As a result, it does not require any additional interactions with the environment. This makes it a zero-shot approach regarding environment interactions. We empirically demonstrate in a number of environments that our proposed dynamic policy fusion approach consistently achieves the intended task while simultaneously adhering to user-specific needs.}

\keywords{Reinforcement Learning, User Alignment, Dynamic Policy Fusion, Human in the Loop.}

\maketitle
\section{Introduction}
\label{sec:intro}

Reinforcement learning (RL) has achieved remarkable success across a variety of domains, including games \citep{mnih2013playing}, robotics \citep{han2023survey}, healthcare \citep{yu2021reinforcement}, and recommendation systems \citep{sun2018conversational}. Standard RL agents are optimised to maximise a predefined reward signal, which typically encodes task efficiency. While this approach ensures task completion, the resulting agent behaviour may not align with the subjective preferences of human users. For instance, in a driving context, Reinforcement Learning (RL) agent trained to minimise travel time might adopt aggressive manoeuvres, whereas a human passenger would likely prefer a smoother, safer driving style. Since such personal preferences are rarely captured in the original reward function, aligning agent behaviour with user expectations remains a critical challenge.

A common approach involves retraining the agent with a reward function modified to incorporate user preferences. However, this is often impractical; designing such personalised reward functions is non-trivial, and retraining from scratch demands substantial additional interaction with the environment, which can be costly or infeasible. A more pragmatic alternative is to adapt a pre-trained policy which we refer to as the \emph{task-specific policy} so that it continues to achieve the primary task goal while also accommodating the user's intent.

Our framework personalises a pre-trained task-specific policy by inferring user intent from trajectory-level feedback. Critically, this feedback is applied to the same trajectories generated during the training of the task policy itself, enabling a zero-shot approach that requires no new environment interaction. From this feedback, we construct an auxiliary \emph{intent-specific policy} that captures user preferences on \emph{how} the task should be performed. The final, personalised policy is formed by fusing the original \emph{task-specific policy} with this new \emph{intent-specific policy}. A key challenge in this process is to balance the two objectives: blindly following user intent could compromise task success, while ignoring it defeats the purpose of personalisation. To resolve this, we introduce \emph{dynamic policy fusion} mechanism that adaptively modulates the influence of each policy based on the current state, ensuring both task completion and alignment with user preferences.

The main contributions of this paper are as follows:
\begin{itemize}
    \item We propose a \textbf{zero-shot personalisation framework} that adapts pre-trained RL policies to user-specific preferences using trajectory-level feedback, eliminating the need for additional environment interaction.
    \item We introduce a \textbf{policy fusion formulation} with formal guarantees, including bounded divergence from the original task policy and an invariability property for identical constituent policies.
    \item We develop a mechanism to dynamically adapt the influence of the intent policy, ensuring the final behaviour robustly balances task completion with user alignment.
\end{itemize}

This paper is an extension of our preliminary work presented in \cite{ajsal2024personalisation}. We expand upon the initial research in three key aspects:
\begin{itemize}
    \item We provide a comprehensive theoretical analysis of the policy fusion mechanism, establishing formal guarantees for its behaviour.
    \item Establishing the necessity of dynamic policy fusion through detailed empirical studies.
    \item We broaden the experimental evaluation to include more diverse human preference models, validating the robustness and versatility of our framework.
\end{itemize}

The rest of this paper is organised as follows: Section~\ref{sec:related} reviews related work and Section~\ref{sec:background} details the necessary preliminaries. Section~\ref{sec:methodology} presents our proposed approach, followed by our experimental evaluations in Section~\ref{sec:experiments}. Finally, Section~\ref{sec:limitation} discusses the limitations of our work and Section~\ref{sec:conclusion} concludes the paper.

\section{Related Work}
\label{sec:related}
Learning from human feedback is of particular interest in the RL community as it leverages human knowledge during the learning process, offering several benefits. Firstly, it improves the efficiency of the system in terms of sample requirements as well as overall performance \citep{guan2020explanation}. Secondly, leveraging human feedback has been shown to enable RL agents to solve complex tasks that are otherwise challenging to manually specify through conventional reward functions \citep{lee2021pebble,christiano2017deep}. Consequently, there exist a number of approaches aimed at leveraging human feedback for learning. 

A number of works in interactive RL \citep{torrey2013teaching,macglashan2017interactive} examine how agents can learn from state-action level human feedback. However, such methods are limited, as providing state-action-level feedback is non-intuitive and cognitively taxing. Preference-based RL \citep{christiano2017deep,lee2021pebble}, uses trajectory-level human preference feedback, using which a corresponding reward function is learned. However, following the learning of the reward function, this approach may require the agent to interact with the environment again to learn the corresponding policy. As our present work focuses on developing a zero-shot approach, to estimate human intent, we instead use Return Decomposition for Delayed Rewards (RUDDER) \citep{arjona2019rudder}, an Long Short Term Memory (LSTM)-based approach for credit assignment by directly approximating the Q-values from agent trajectories. These Q-values are then translated into the required intent-specific policy. There are more recent work addressing credit assignment such as \cite{patil2020align,zhang2024interpretable}, but we resort to RUDDER for our work due to it's simplicity.


In the context of policy fusion, a closely related work by \cite{sestini2021policy} combines policies with different fusion methods. However, the described policy fusion methods are static and may result in the dominance of one of the constituent policies. Policy fusion has been explored previously in \cite{haarnoja2018composable} and \cite{hunt2019composing}. Such works study the concept of learning different policies independently using soft Q-functions, each with its own reward function. These individual policies are later combined, leading to the emergence of new behaviours in robotic manipulations. Our work views policy fusion through a lens of personalisation, requiring the fusion to satisfy certain constraints. In addition, unlike existing policy fusion methods, our dynamic policy fusion approach prevents the over-dominance of the component policies.

The overarching aim of our work is to personalise existing task policies using trajectory level human feedback without the need for additional environment interactions. The goal of personalisation could possibly be viewed from a multi-objective reinforcement learning (MORL) \citep{hayes2022practical}  or constrained RL \citep{tessler2018reward} perspective. 
MORL embodies personalisation by determining which objectives take precedence, and addressing conflicting objectives \citep{basaklar2022pd} through Pareto-optimal solutions.  
In ~\cite{mo2018personalizing}, a personalised dialogue system is developed using transfer learning, adapting common dialogue knowledge from a source domain to a target user, whereas ~\cite{bodas2018reinforcement} aims to personalise non-player characters (NPCs) to human skill levels to enhance player engagement. This approach designs a composite scalar reward function that implicitly matches the NPC's skill level to that of the human player. However, it necessitates additional environment interaction and NPC training with the engineered reward function. Unlike these works, our approach does not require any additional interaction or specifically crafting the reward function. We believe these benefits make the present work particularly relevant for real-world scenarios.


\section{Preliminaries and Background}
\label{sec:background}
We briefly survey some of the related backgrounds that form the basis for our work. We refer to the policy trained to solve the task as the \emph{task-specific policy} $(\pi_\phi)$ and the policy that captures human intent as the \emph{intent-specific policy}$(\pi_\psi)$.

\subsection{Reinforcement Learning}
The policies considered in this work are learned by solving tasks represented as individual Markov Decision Processes (MDPs) \citep{puterman2014markov}  $\mathcal M = (\mathcal S, \mathcal A,  P,  R, \gamma)$ where $\mathcal S$, and $\mathcal A$ are the state and action spaces. $P$ is the transition function that captures the transition dynamics of the environment. $R$ is the reward function and $\gamma$ is the discount factor. In each task, $R$ can vary, while the remaining components stay the same. At timestep $t$, the agent in state $s_t \in \mathcal{S}$ takes an action $a_t \in \mathcal{A}$ and obtains a reward $R(s_{t+1}, s_t, a_t)$ and moves to state $s_{t+1}$ according to the transition function $P(s_{t+1}|s_t,a_t)$. A policy $\pi(a_t|s_t)$ outputs the probability of taking an action $a_t$ from a given state $s_t$.  The episodic discounted return is $G_t = \sum_{t=0}^{T}\gamma^tr_t$, where $\gamma$ specifies how much the future reward is discounted and $T$ is the total number of timesteps. The agent's objective is to maximise the future expected reward $\mathbb{E}[G_t]$ by learning a 
Q-function $Q(s_t,a_t)$, which estimates the expected cumulative reward.

\subsection{RUDDER}
\label{sec:human_leanring}



 RUDDER \citep{arjona2019rudder} addresses the problem of credit assignment and learning from sparse rewards. In RUDDER, an LSTM network analyses the entire trajectory data with a score received at the end to estimate Q-values for individual state-action pairs within the trajectory. These Q-values represent the expected future reward for taking an action in a given state. RUDDER also showed a densified reward can be formulated from the human feedback score as follows,
\begin{equation}
\label{eq:reward_redistribution}
E[r_{t+1}|s_t,a_t,s_{t-1},a_{t-1}] = Q'(s_t,a_t) - Q'(s_{t-1},a_{t-1}),
\end{equation}
where, $Q'(s_t,a_t)$ represents the LSTM-generated Q-values corresponding to the state-action pair at the specific time step $t$.

We use the RUDDER framework to learn the Q-values corresponding to human scores and later convert the Q-values to policies as described in Sections \ref{sec:intent_learning} and \ref{sec:policy_constrcution}. The densified reward as in \eqref{eq:reward_redistribution} is used to modulate the influence of the policy as discussed in Section \ref{sec:dynamic_policy_fusion}. For more details of the training of LSTM, please refer to Section B of the supplementary.

\section{Methodology}
\label{sec:methodology}
We construct our proposed approach for personalisation by first inferring the intent-specific policy via trajectory-level human feedback using RUDDER, followed by a dynamic policy fusion mechanism that automatically maintains a balance between the inferred policy and a trained task-specific policy.

We assume that the trajectory data used to train task-specific policy is accessible, and reusing this data allows us to perform zero-shot personalisation i.e., without collecting new environment interactions. A subset of this data is sampled with personalised human feedback scores with more desirable trajectories assigned higher scores. We then infer the human intent (intent-specific policy) using RUDDER (Section \ref{sec:human_leanring}) and dynamically fuse it with the task-specific policy. The overview of our method is illustrated in Fig \ref{fig:overview}. The subsequent sections provide in-depth details of the components of our personalisation approach.

\begin{figure*}
\includegraphics[width=0.9\textwidth]{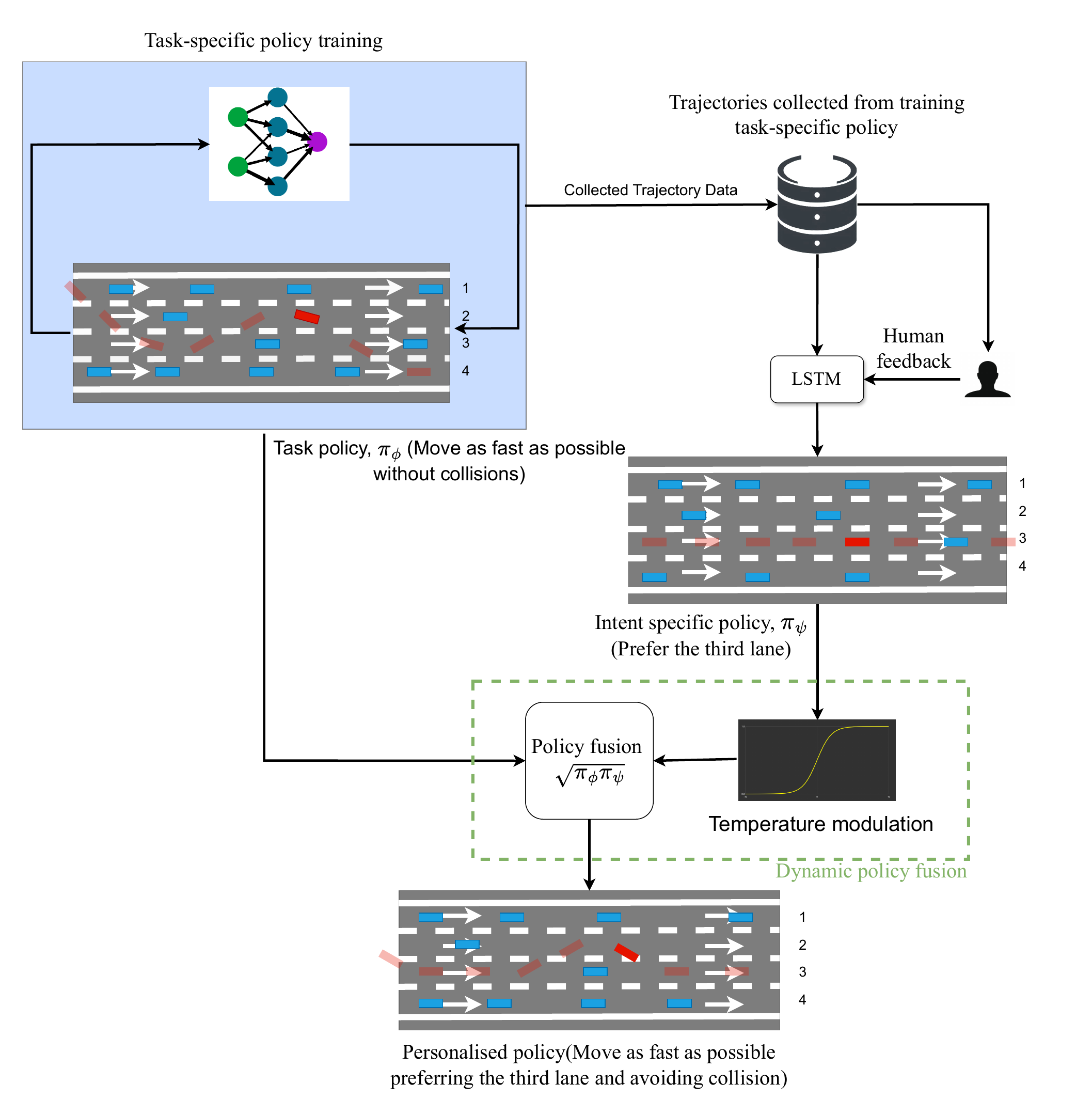}
\caption{Summary of our method. Trajectory-level training data used to train the task policy is labelled by a human user and an LSTM model is employed to identify the intent of the human. The output from the LSTM is converted to an intent-specific policy which is then dynamically fused by modulating the temperature parameter of the intent-specific policy (Discussed in Section \ref{sec:dynamic_policy_fusion}) with the task-specific policy to obtain the personalised policy. The blue shaded block is the task policy training and the green dotted block is the dynamic policy fusion. (Image taken from previous study \cite{ajsal2024personalisation})} \label{fig:overview}
\end{figure*}

\subsection{Learning Human Intent using LSTM}
\label{sec:intent_learning}

To learn the intent-specific policy, as previously described, we leverage the LSTM-based approach of RUDDER. We train this LSTM using human trajectory-level feedback on the same training data used to learn the task-specific policy. Hence, no additional interaction data is required. To get the human feedback, we simulated the human. The feedback score is the number of times the human preference is met within a trajectory (Refer to supplementary Section A for more details).

At each time step, the state-action vector is fed into the LSTM units. In the case of image inputs, we pre-train a Variational Autoencoder (VAE) to reduce the dimensionality of the state. We use Feature-wise Linear Modulation (FiLM) \citep{perez2018film} to modulate the state vector with the action vector. In \cite{perez2018film}, a FiLM network conditions the feature map of the neural network depending on another input signal. Here, we condition the state feature with the action vector and train the LSTM using the human feedback data as training labels. With this training setup, (More details in supplementary Section B and C) the model approximates the Q-value, which is then converted to the intent-specific policy as described in the next section. 
\subsection{Policy Construction}
\label{sec:policy_constrcution}
We use a Deep Q Network (DQN) parameterised by $\phi$ to learn the task and it produces the Q-values ($Q$) for each state-action pair. Similarly, the LSTM parameterised by $\psi$ also produces the Q-values ($Q'$) corresponding to the human intent. We choose the Boltzmann distribution to convert the Q-values into a policy. Therefore $\pi_\phi$ is the task-specific policy and $\pi_\psi$ is the intent-specific policy. This choice is motivated by the fact that if the intersection of the support of two policies is an empty set, their product would result in a random policy. By contrast, the Boltzmann distribution assigns at least a small probability to all actions, ensuring that the support of the distribution covers the entire action space. 
\begin{equation}
\label{eq:t1}
    \pi_\phi(a|s_t) = \frac{\exp{(\frac{Q(s_t,a)}{T_{\phi}})}}{\sum_{a \in \mathcal{A}} \exp{(\frac{Q(s_t,a)}{T_{\phi}})}},
\end{equation}

\begin{equation}
\pi_\psi(a|s_t) = \frac{\exp{(\frac{Q'(s_t,a))}{T_{\psi}})}}{\sum_{a \in \mathcal{A}} \exp{(\frac{Q'(s_t,a))}{T_{\psi}})}},
\label{eq:intent_policy_boltzmann}
\end{equation}
where $T_{\phi}$ is the temperature corresponding to task-specific policy and $T_{\psi}$ is the temperature corresponding to intent-specific policy. With policies constructed from Q-values, these constituent policies are fused together to produce the \emph{personalised policy}.

\subsection{Policy Fusion}
\label{sec:fusion}
Policy fusion is a process of combining two or more policies to produce a new policy. ~\cite{sestini2021policy} discussed several approaches through which policy fusion can be performed. We also aim to obtain a personalised policy by fusing the intent-specific and task-specific policies. However, policy fusion in the context of personalisation should satisfy two constraints. Firstly, fused policy should be identical to the task-specific policy if the task-specific and intent-specific policies are identical to each other - a property that we refer to as the invariability constraint. This constraint is crucial because it ensures that when the objectives of the task and the human are aligned, the personalised policy remains consistent and does not deviate during the fusion of identical constituent policies. Correspondingly, we define invariant policies as follows:
\begin{definition}[Invariant policies]
    Two policies $\pi_1(a|s)$ and $\pi_2(a|s)$ are said to be invariant policies if 
    \[
    KL\!\left( \pi_1(a|s) \,\middle\|\, \pi_2(a|s) \right) = 0 
    \quad \forall \, s \in \mathcal{S}.
    \]
\end{definition}
The second constraint is that the fused policy should act on the common support of the policies being fused.
To accommodate the aforementioned constraints, we construct the personalised policy through a new fusion method as follows: 
\begin{equation}
\label{Eq:fusion}
    \pi_f(a|s) = \tfrac{1}{Z}\sqrt{\pi_\phi(a|s_t) \times \pi_\psi(a|s_t)},
\end{equation}
where Z is the normalising factor.
Since $\pi_{\phi}$ and $\pi_{\psi}$ are independent, their joint probability can be expressed as the product of $\pi_\phi$ and $\pi_\psi$. This ensures that the fused policy acts on the common support. Moreover, this fusion method ensures both the task-specific policy and the fused policy are invariant policies if the task-specific policy and the intent-specific policy are identical, thereby satisfying the invariability constraint.
\begin{lemma}
    If $\pi_\phi(a|s) = \pi_\psi(a|s)$, then $\pi_\phi(a|s)$ and $\pi_f(a|s)$ are invariant policies.
\end{lemma}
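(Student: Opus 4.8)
The plan is to prove the statement directly from the definition of invariant policies, which requires showing that $KL\left(\pi_\phi(a|s) \middle\| \pi_f(a|s)\right) = 0$ for all $s \in \mathcal{S}$. By the definition given, this holds precisely when $\pi_\phi(a|s) = \pi_f(a|s)$ pointwise (since the KL divergence between two distributions vanishes if and only if they are equal almost everywhere, and here the action space is discrete). So the entire problem reduces to the algebraic claim that under the hypothesis $\pi_\phi(a|s) = \pi_\psi(a|s)$, the fused policy defined in \eqref{Eq:fusion} coincides with $\pi_\phi(a|s)$.

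First I would substitute the hypothesis $\pi_\psi(a|s) = \pi_\phi(a|s)$ into the fusion formula \eqref{Eq:fusion}. The geometric mean collapses: $\sqrt{\pi_\phi(a|s)\times\pi_\psi(a|s)} = \sqrt{\pi_\phi(a|s)^2} = \pi_\phi(a|s)$, where the last step uses that policy values are nonnegative probabilities so the square root of the square is the value itself. Thus before normalisation the fused quantity is simply $\pi_\phi(a|s)$. The next step is to evaluate the normalising constant $Z = \sum_{a\in\mathcal{A}}\sqrt{\pi_\phi(a|s)\times\pi_\psi(a|s)}$; under the same substitution this becomes $\sum_{a\in\mathcal{A}}\pi_\phi(a|s) = 1$, since $\pi_\phi$ is a valid probability distribution over the action space. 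Therefore $\pi_f(a|s) = \tfrac{1}{1}\,\pi_\phi(a|s) = \pi_\phi(a|s)$ for every action $a$ and every state $s$.

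Having established pointwise equality $\pi_f(a|s) = \pi_\phi(a|s)$, I would conclude by plugging this into the KL divergence: $KL\left(\pi_\phi(a|s)\middle\|\pi_f(a|s)\right) = \sum_{a\in\mathcal{A}}\pi_\phi(a|s)\log\frac{\pi_\phi(a|s)}{\pi_f(a|s)} = \sum_{a\in\mathcal{A}}\pi_\phi(a|s)\log 1 = 0$ for all $s\in\mathcal{S}$, which is exactly the Invariant policies condition from Definition~1. This completes the argument.

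This proof is essentially a one-line algebraic verification, so there is no serious obstacle; the only point requiring a small amount of care is the normalisation. One must verify that $Z$ genuinely equals $1$ under the hypothesis rather than treating it as an arbitrary constant, since it is the collapse of $Z$ to unity that makes the geometric-mean fusion reproduce $\pi_\phi$ exactly rather than merely proportionally. It is also worth noting explicitly that the hypothesis $\pi_\phi = \pi_\psi$ is the pointwise-equality form of the invariability premise (equivalently $KL(\pi_\phi\|\pi_\psi)=0$ by Definition~1), so the lemma is really showing that the fusion operator \eqref{Eq:fusion} preserves agreement between its inputs.
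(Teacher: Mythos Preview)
Your proof is correct and follows the same approach as the paper: both reduce to showing $\pi_f = \pi_\phi$ under the hypothesis and then use that the KL divergence of a distribution with itself vanishes. The paper's proof is a terse one-liner that simply writes $KL(\pi_\phi \| \pi_f) = KL(\pi_\phi \| \pi_\phi) = 0$, leaving the verification that $\pi_f = \pi_\phi$ (in particular the $Z=1$ step) implicit, whereas you have spelled out exactly those details.
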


\begin{proof}
    By assumption, $\pi_\phi(a|s) = \pi_\psi(a|s)$. Hence
    \[
        KL\!\left(\pi_\phi(a|s) \,\middle\|\, \pi_f(a|s)\right) 
        = KL\!\left(\pi_\phi(a|s) \,\middle\|\, \pi_\phi(a|s)\right) = 0.
    \]
    Therefore, $\pi_\phi(a|s)$ and $\pi_f(a|s)$ are invariant policies.
\end{proof}

The above lemma captures the invariability of the proposed fused method when $\pi_\phi(a|s) = \pi_\psi(a|s)$. However, if the deviation between the Q-values of the intent-specific policy and the task-specific policy is bounded (given the deviation of the temperature of both policies is bounded), then the divergence between the task-specific policy and the personalised policy can be bounded as stated in Theorem \ref{the:bound}. This theorem constraints the proposed fusion method to arbitrarily deviate from the task policy.

\begin{theorem}
\label{the:bound}
    Let $Q$ and $Q'$ be Q-values corresponding to the task policy and the intent-specific policy respectively. Let $\pi_\psi(a|s)$ and $\pi_\phi(a|s)$ represent the respective policies, with corresponding temperatures $T_\psi$ and $T_\phi$. Let $\norm{Q(s,a) - Q'(s,a)}_2 < \epsilon \ \forall s\in \mathcal{S} \ and \ a\in\mathcal{A}$  and $\norm{T_\psi-T_\phi}_2 < \delta$. Then, 
    \begin{center}
        \(KL\left(\pi_\phi(a|s) \middle\| \pi_f(a|s)\right) \leq\log \left( Z\right) + \dfrac{1}{2}
            \left(\dfrac{Q^{*}\delta + \epsilon T_\phi}{T_\phi T_\psi} \right)
            +\dfrac{1}{2}\log\left(\zeta\right) \ \forall a\in\mathcal{A}, s\in\mathcal{S}\),
    \end{center}
    where $Z$ is the normalising factor of $\pi_f$, $Q^{*} = max_{a \in \mathcal{A}}Q(s, a)$ and $\zeta = \dfrac{h(Q',T_\psi)}{h(Q,T_\phi)}$, here $h(Q,T) = \sum_aexp\left(\dfrac{Q}{T}\right)$.
\end{theorem}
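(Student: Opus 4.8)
The plan is to exploit the geometric-mean structure of the fused policy \eqref{Eq:fusion} to first collapse the target divergence $KL(\pi_\phi \| \pi_f)$ into a divergence between the two \emph{constituent} policies, and only then bound that residual divergence using the assumed closeness of the Q-values and the temperatures. This two-stage reduction is what produces the three additive terms in the statement.

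First I would substitute $\pi_f = \tfrac{1}{Z}\sqrt{\pi_\phi \pi_\psi}$ into the log-ratio inside the definition of the KL divergence. Since the ratio $\pi_\phi/\pi_f$ equals $Z\sqrt{\pi_\phi/\pi_\psi}$, the logarithm separates as $\log(\pi_\phi/\pi_f) = \log Z + \tfrac12\log(\pi_\phi/\pi_\psi)$. Taking the expectation under $\pi_\phi$ and using $\sum_a \pi_\phi(a|s) = 1$ then yields the clean decomposition
\begin{equation*}
KL\left(\pi_\phi(a|s) \middle\| \pi_f(a|s)\right) = \log\left(Z\right) + \tfrac12\, KL\left(\pi_\phi(a|s) \middle\| \pi_\psi(a|s)\right).
\end{equation*}
This already isolates the $\log\left(Z\right)$ term and the overall factor of $\tfrac12$ that appear in the theorem, so it remains only to bound $KL(\pi_\phi \| \pi_\psi)$ by $\frac{Q^{*}\delta + \epsilon T_\phi}{T_\phi T_\psi} + \log\zeta$.

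Next I would expand $KL(\pi_\phi \| \pi_\psi)$ using the Boltzmann forms \eqref{eq:t1} and \eqref{eq:intent_policy_boltzmann}. The log-ratio of the two distributions splits into a per-action term $\frac{Q(s,a)}{T_\phi} - \frac{Q'(s,a)}{T_\psi}$ plus the log-ratio of the partition functions, which is exactly $\log\zeta$ with $\zeta = h(Q',T_\psi)/h(Q,T_\phi)$. As $\log\zeta$ does not depend on $a$, it factors out of the expectation under $\pi_\phi$ and contributes the $\tfrac12\log(\zeta)$ term directly after the earlier halving.

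The main obstacle is bounding the per-action term $\frac{Q}{T_\phi} - \frac{Q'}{T_\psi}$ uniformly over actions. Here I would place it over the common denominator and apply the add-and-subtract trick $Q T_\psi - Q' T_\phi = Q(T_\psi - T_\phi) + T_\phi(Q - Q')$, which cleanly decouples the temperature mismatch from the Q-value mismatch. Using $Q(s,a) \leq Q^{*}$ together with the hypotheses $\norm{Q - Q'}_2 < \epsilon$ and $\norm{T_\psi - T_\phi}_2 < \delta$ bounds the numerator by $Q^{*}\delta + \epsilon T_\phi$, giving $\frac{Q}{T_\phi} - \frac{Q'}{T_\psi} \leq \frac{Q^{*}\delta + \epsilon T_\phi}{T_\phi T_\psi}$; since this holds for every action, taking the expectation under $\pi_\phi$ preserves it, and combining with the $\log\zeta$ term and the factor-of-$\tfrac12$ decomposition delivers the claimed inequality. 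The subtle point to verify carefully is the interpretation of the $\ell_2$-norm hypotheses at the per-state-action level, so that the scalar pointwise bounds $Q - Q' \le \epsilon$ and $T_\psi - T_\phi \le \delta$ may be legitimately substituted, and checking the sign conventions so the inequality points in the stated direction.
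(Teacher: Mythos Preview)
Your proposal is correct and follows essentially the same route as the paper: both substitute the geometric-mean form of $\pi_f$, split off $\log Z$ and the factor $\tfrac12$, reduce to the per-action difference $\tfrac{Q}{T_\phi}-\tfrac{Q'}{T_\psi}$ plus $\log\zeta$, and then apply the same add-and-subtract trick $QT_\psi-Q'T_\phi = Q(T_\psi-T_\phi)+T_\phi(Q-Q')$ followed by $Q\le Q^{*}$. Your intermediate observation that $KL(\pi_\phi\|\pi_f)=\log Z+\tfrac12 KL(\pi_\phi\|\pi_\psi)$ is a slightly cleaner packaging of what the paper writes out line by line, but the argument is the same.
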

\begin{proof}
        Please see Section ~\ref{sec:proofs}
    \end{proof}

The above theorem establishes the boundedness of the divergence between the task-specific policy and the personalised policy. Nevertheless, we can consider other choices for policy fusion as studied in the previous work \citep{sestini2021policy}, such as the weighted average or the product of the two policies. However, the former does not operate on the common support, and the latter does not satisfy the invariability constraint, even when the policies are exactly the same, as evidenced in the Lemma \ref{lm:invariability}. For a detailed analysis, we refer the reader to the supplementary Section ~\ref{sec:other_choice}.

\begin{lemma}
\label{lm:invariability}
    Let $\pi_\psi$ and $\pi_\phi$ be intent-specific and task-specific policies respectively, and assume $\pi_\psi$ is not a random policy. Then $\pi_\phi$ and $\tfrac{1}{Z}\pi_\psi\pi_\phi$, where $Z$ is a normalising factor, are not invariant policies under any condition.
\end{lemma}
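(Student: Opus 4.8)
The plan is to argue by contradiction, showing that the product-fused policy $\pi_{\mathrm{prod}}(a|s) := \tfrac{1}{Z}\pi_\psi(a|s)\pi_\phi(a|s)$ can coincide with $\pi_\phi$ only when $\pi_\psi$ collapses to the uniform (random) policy, which the hypothesis forbids. First I would recall that, by the definition of invariant policies, $\pi_\phi$ and $\pi_{\mathrm{prod}}$ being invariant means $KL\!\left(\pi_\phi(a|s)\,\middle\|\,\pi_{\mathrm{prod}}(a|s)\right)=0$ for every $s\in\mathcal{S}$. Since $\pi_{\mathrm{prod}}(a|s)>0$ wherever $\pi_\phi(a|s)>0$, Gibbs' inequality gives that this KL divergence vanishes if and only if the two distributions agree, so invariance is equivalent to the pointwise identity $\pi_\phi(a|s)=\pi_{\mathrm{prod}}(a|s)$ for all $a\in\mathcal{A}$ and all $s\in\mathcal{S}$. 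It therefore suffices to show this identity cannot hold at every state unless $\pi_\psi$ is random.

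Next I would substitute the explicit form of $\pi_{\mathrm{prod}}$. Fixing a state $s$ and writing $Z(s)=\sum_{a'\in\mathcal{A}}\pi_\psi(a'|s)\pi_\phi(a'|s)$ for the normaliser, the identity reads $\pi_\phi(a|s)=\tfrac{1}{Z(s)}\pi_\psi(a|s)\pi_\phi(a|s)$ for each $a$. Here I would invoke the structural fact established in Section \ref{sec:policy_constrcution}: because both policies are Boltzmann distributions over their Q-values, they assign strictly positive probability to every action, so $\pi_\phi(a|s)>0$ for all $a$. This lets me divide through by $\pi_\phi(a|s)$, reducing the identity to $\pi_\psi(a|s)=Z(s)$ for every action $a$. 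In particular $\pi_\psi(\cdot|s)$ is constant across actions, hence equal to the uniform distribution over $\mathcal{A}$ with $Z(s)=1/|\mathcal{A}|$; if invariance is to hold at \emph{every} state this forces $\pi_\psi$ to be the random policy at every $s$, contradicting the assumption that $\pi_\psi$ is not random. The same computation specialised to $\pi_\phi=\pi_\psi=\pi$ yields $\pi(a|s)=\pi(a|s)^2/Z(s)$, again forcing $\pi(\cdot|s)$ to be uniform, which is exactly the claim that product fusion violates the invariability constraint even when the two constituent policies are identical.

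To make the conclusion precise I would be careful with the quantifiers in the definition: ``not invariant'' requires exhibiting a single state at which the divergence is strictly positive. So I would pin down one state $s^\star$ at which $\pi_\psi(\cdot|s^\star)$ is non-uniform -- guaranteed by the hypothesis that $\pi_\psi$ is not a random policy -- and run the argument contrapositively there, concluding $\pi_\psi(a|s^\star)=Z(s^\star)$ fails for some $a$, hence $\pi_\phi(\cdot|s^\star)\neq\pi_{\mathrm{prod}}(\cdot|s^\star)$ and therefore $KL\!\left(\pi_\phi\,\middle\|\,\pi_{\mathrm{prod}}\right)>0$ at $s^\star$.

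The only delicate point I anticipate is this quantifier handling together with the reliance on the full support of the Boltzmann policies: it is the strict positivity of $\pi_\phi(a|s)$ that legitimises dividing through and collapses the identity to $\pi_\psi(a|s)=Z(s)$, and it is the single-state witness $s^\star$ that correctly negates the ``for all $s$'' in the definition. Without full support the product could vanish on part of the action space, and the argument would have to be restricted to the common support -- so flagging the Boltzmann construction of Section \ref{sec:policy_constrcution} as the essential ingredient is, I expect, the main thing to get right.
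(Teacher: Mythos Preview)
Your proof is correct, but it takes a somewhat different route from the paper's. The paper computes the KL divergence directly as $\log Z - \sum_a \pi_\phi \log \pi_\psi$, sets it to zero, rewrites the resulting identity as $\log\bigl(E_{\pi_\phi}[\pi_\psi]\bigr)=E_{\pi_\phi}[\log \pi_\psi]$, and then invokes Jensen's inequality to conclude that equality forces $\pi_\psi$ to be constant (i.e.\ the random policy). You instead use Gibbs' inequality up front to convert $KL=0$ into the pointwise equality $\pi_\phi=\tfrac{1}{Z}\pi_\phi\pi_\psi$, then divide by $\pi_\phi$ (using the full Boltzmann support) to get $\pi_\psi\equiv Z$ directly by elementary algebra. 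Your argument is more transparent and makes the reliance on full support and the state-level quantifier in the definition of invariance explicit---points the paper leaves implicit. The paper's Jensen route is slightly slicker in that it never needs to divide, but the equality case of Jensen likewise only yields ``$\pi_\psi$ constant on $\operatorname{supp}\pi_\phi$'', so it too silently leans on the Boltzmann full-support assumption you flagged.
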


\begin{proof}
Please see Section ~\ref{sec:proofs}
\end{proof}

Despite the compliance to invariance constraints of the fused policy, in practice, we found that this approach of statically fusing policies is subject to certain pitfalls, as explained in the next section. Subsequently, we address the issue through our novel dynamic policy fusion approach in Section \ref{sec:dynamic_policy_fusion}.

\subsection{Pitfalls of Static Fusion}
\label{sec:pitfall}
With the static policy fusion technique described in Section \ref{sec:fusion}, a potential challenge arises wherein one of the component policies over-dominates the other. Consequently, the agent may disproportionately exhibit the corresponding behaviour, resulting in noncompliance with either the task-specific policy or the intent-specific policy. 

To illustrate this phenomenon, we consider a 2D Navigation scenario where an agent is tasked with reaching a target location. However, a human may wish for some checkpoint state, different from the target state, to be visited before the agent reaches its target. In this case, the task-specific policy corresponds to the actions along the shortest path towards the target, and an intent-specific policy would correspond to one inferred from human feedback, which exclusively favours visiting the checkpoint state. A static policy fusion approach in this case could lead to over-dominance of one of the component policies. For example, over-dominance of the intent-specific policy would lead to the agent visiting the checkpoint state indefinitely. This motivates the need for fusing the policies in a dynamic fashion, such that the agent respects the human's preferences, while also simultaneously completing the task at hand. We develop such a dynamic fusion technique to control the relative dominance of the individual policies by modulating the temperature parameter $T_{\psi}$ of the intent-specific policy. We now describe the details of our dynamic fusion strategy.

\begin{figure}
\centering
    \includegraphics[width=0.45\textwidth]{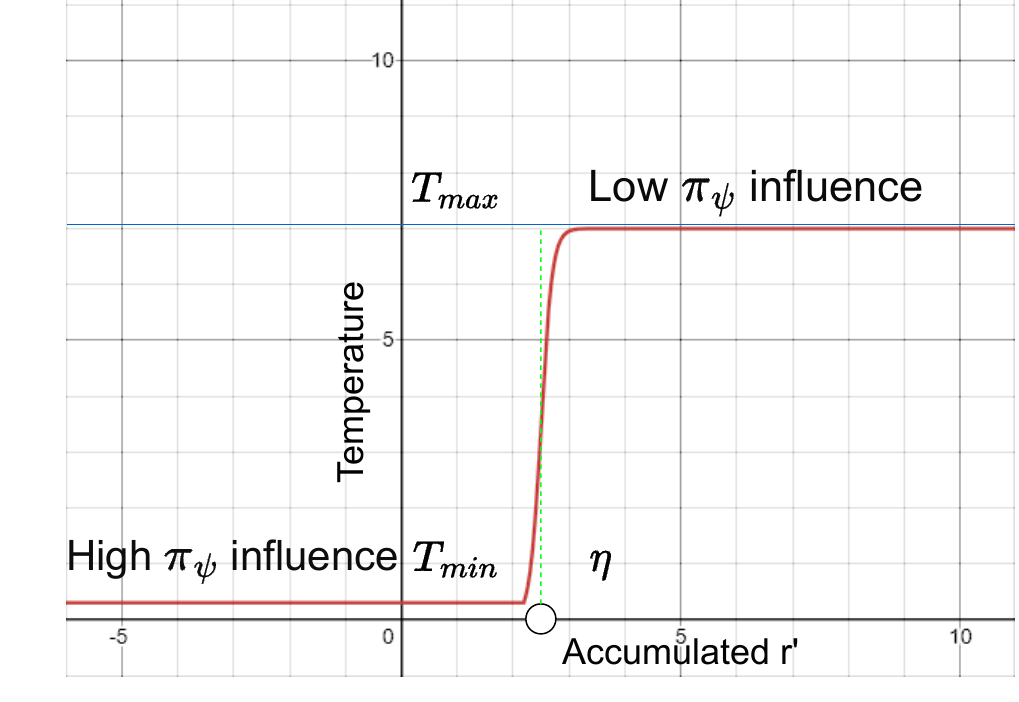}
    \caption{Plot showing the variation of $T_{\psi}$. The temperature rises when the accumulated $r'$ reaches $\eta$.}
    \label{fig:sigmoid}
\end{figure}

\setlength{\textfloatsep}{1pt}
\begin{algorithm}[t]
\caption{Personalising the action selection in an episode}
\label{alg:algorithm}
\begin{flushleft}
\textbf{Input}: DQN and LSTM Networks $\phi$ and $\psi$, accumulated reward threshold $\eta$, Task-specific policy temperature $T_{\phi}$, Min and Max temperature $T_{min}$ and $T_{max}$, slope of the sigmoid $m$ \\
\textbf{Reset} the environment to get the initial state $s_0$ \\
\textbf{Initialise}: $Q'(s_0,a_0) = 0$, $g(0) = 0$, $T_{\psi} = max(T_{min}, \frac{T_{max}}{1+exp(m\times\eta))})$, $done=False$, $t=0$
\end{flushleft}
\begin{algorithmic}[1] 
\While{not $done$}
\State $Q_{t} \gets []$ \Comment{DQN Q-values reset}
\State $Q'_{t} \gets []$ \Comment{LSTM Q-values reset}
\State $r_{t} \gets []$ \Comment{Human-induced reward reset}
\State $Q_{t} \gets \phi(s_t)$ \Comment{Invoking DQN}
\State $Q'_{t} \gets \psi(s_t,a)$ \Comment{Invoking LSTM}
\State Update $r_{t}$ as in Equation \eqref{eq:reward_redistribution}
\State $\pi_\phi \gets BoltzmanDistribution(Q_{t}, T_{\phi})$ 
\State $\pi_\psi \gets BoltzmanDistribution(Q'_{t}, T_{\psi})$ 
\State $a_t \gets \arg\max_{a \in \mathcal{A}} \sqrt{\pi_\phi(a|s_t) \times \pi_\psi(a|s_t)} $ 
\State $r' \gets r_{t} - mean(r_{t})$ \Comment{Eq \eqref{eq:adjust}}
\State $g(t) \gets \sum_{t'=0}^tr'(s_{t'},a_{t'})$
\State $T_{\psi} \gets max(T_{min}, \frac{T_{max}}{1+exp(-m(g-\eta))})$ \Comment{Eq \eqref{eq:temp}}
\State Execute the action $a_t$ from state $s_t$ to transition to $s_{t+1}$
\State $t \gets t+1$
\If {Episode terminate}
\State $done = True$.
\EndIf
\EndWhile
\end{algorithmic}
\end{algorithm}

\subsection{Dynamically Modulating Policy Dominance}
\label{sec:dynamic_policy_fusion}
To mitigate unintended policy dominance, we adopt the idea that when the temperature parameter $T_{\psi}$ in Equation \eqref{eq:intent_policy_boltzmann} is increased, the probability distribution of actions tends to become uniform, thereby reducing the influence of the intent-specific policy on the personalised policy. We therefore modulate $T_{\psi}$ depending on whether the fused personalised policy exhibits over-adherence or under-adherence to the intent-specific policy $\pi_{\psi}$.

\noindent\textbf{When to increase $T_{\psi}$?}
$T_{\psi}$ should increase if the intent-specific policy $\pi_{\psi}$ is enforced too strongly, which is characterised by high accumulated human-induced rewards. We design $T_{\psi}$ to increase when the accumulated human-induced reward surpasses an \emph{accumulated reward threshold} $\eta$.

However, we note that human intent can be specified in various modes: \emph{preference} (where a state is preferred over others), \emph{avoidance} (where the preference is to avoid a particular state) or \emph{mixed} (where the preference is to avoid certain states and to prefer certain others). In avoidance cases, human feedback assigns lower scores to unfavourable trajectories (those that pass through the state(s) to be avoided)  while in the preference case, higher scores are assigned to preferred trajectories (those that pass through the preferred state(s)). Depending on the trajectory score from the human, the LSTM assigns rewards to each state-action pair as in Equation \eqref{eq:reward_redistribution} which we refer to as \emph{human-induced rewards}.

To modulate $T_{\psi}$, we first compute the shifted rewards as follows:
\begin{equation}
\label{eq:adjust}
r' = r- mean(r),
\end{equation}
where $r$ is the vector that contains the rewards for different actions from a given state. The purpose of this shifting operation is to ensure that irrespective of the mode of human intent, the reward vector $r'$ contains elements with both positive as well as negative values.  This shifted human-induced reward $r'$ is then used to modulate $T_\psi$ as: 
\begin{equation}
\label{eq:temp}
T_{\psi}(t) = \max \left(T_{min}, \frac{T_{max}}{1+\exp\left(-\left(g(t)-\eta\right)\right)}\right),
\end{equation}
where 
$T_{min}$ and $T_{max}$ are the minimum and maximum allowable temperatures and $g(t)=\sum_{t'=0}^{t}r'(s_{t'},a_{t'})$ is the accumulated shifted human-induced reward. A plot of $T_\psi$ is shown in the Fig \ref{fig:sigmoid}.

\begin{figure*}[h]
\centering
    \includegraphics[width=1\linewidth]{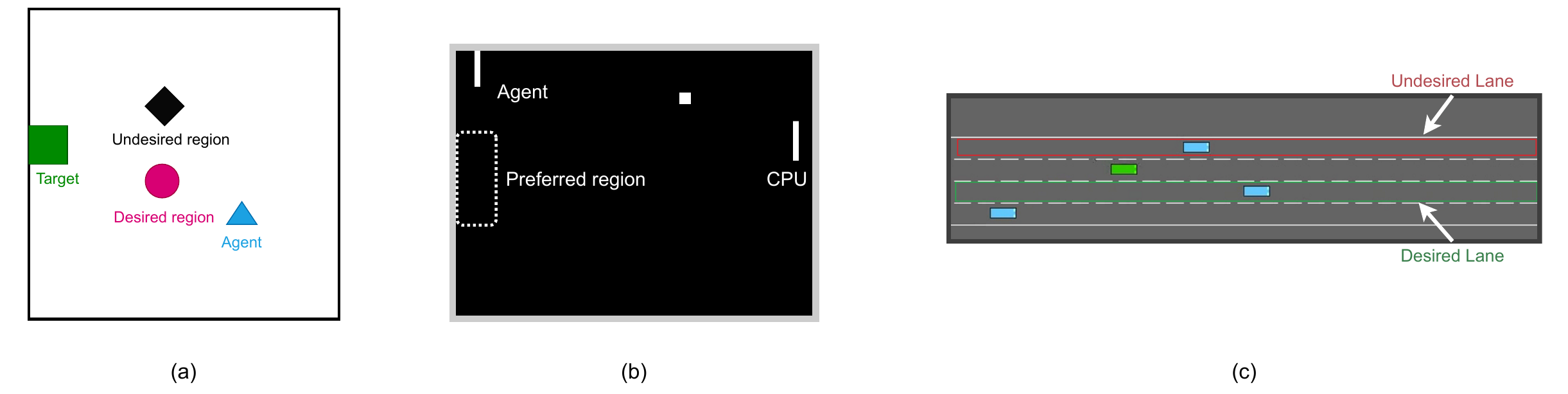}
    \caption{Snippets from the 2D environment (a), Pong (b) and Highway (c). In 2D navigation, the observation is converted to a grey-scale image. In Pong (b), the right paddle is controlled by the CPU and the left is controlled by the agent. The preferred region corresponds to the centre, as indicated. In the Highway environment (c), lanes marked with green and red boxes represent desired and undesired lanes respectively.}
    \label{fig:2d_pool}
\end{figure*}

\noindent\textbf{How does shifting (Equation \eqref{eq:adjust}) enable two-way switching?}
A high accumulated shifted human-induced reward ($>\eta$) indicates the agent has obeyed the intent-specific policy $\pi_{\psi}$ disproportionately more, which flags the need to mitigate the influence of $\pi_{\psi}$. Since the temperature in Equation \eqref{eq:temp} is monotonic with respect to accumulated reward $r'$, a higher accumulated $r'$ outputs a higher temperature, and this reduces the influence of the $\pi_{\psi}$ in the policy fusion step. Conversely, if the accumulated reward $r'$ $<\eta$, the temperature is lowered, which strengthens $\pi_{\psi}$. Therefore, this setup enables a two-way dynamic switching between the policies. Based on the threshold $\eta$, the influence of $\pi_{\psi}$ is reduced when the accumulated $r'$ is high, and strengthened when it is too low.
Our overall algorithm for personalisation is summarised in Algorithm \ref{alg:algorithm}.

\begin{table*}[ht]
\caption{\label{tab:personalisation} Different personalisation modes and their corresponding human and task objectives.}
\centering
\begin{tabular}{l l l l}
\toprule
\textbf{Environment} & \textbf{Mode} & \textbf{Human objective} & \textbf{Task objective} \\
\midrule

\multirow{3}{*}{2D Navigation} 
& Preference & Prefer a region & Move to the target \\
& Avoidance  & Avoid a region  & Move to the target \\
& Mixed      & Prefer a region and avoid another & Move to the target \\
\midrule

Pong 
& Preference & Prefer certain paddle positions & Win the game \\
\midrule

\multirow{3}{*}{Highway} 
& Preference & Prefer a lane & Move as fast as possible without colliding \\
& Avoidance  & Avoid a lane  & Move as fast as possible without colliding \\
& Mixed      & Prefer a lane and avoid another & Move as fast as possible without colliding \\
\bottomrule
\end{tabular}
\end{table*}

\section{Experiments}
\label{sec:experiments}
We demonstrate the effectiveness of our approach in our custom 2D Navigation environment,  Highway \citep{highwayenv}, Pong \citep{tasfi2016PLE} depicted in Fig \ref{fig:2d_pool}. Specifically, we conduct personalisation experiments in three scenarios: \emph{Avoidance}, \emph{Preference} and \emph{Mixed} \footnote{Code for our implementation can be found \href{https://github.com/Ajsal-Shereef/Personalization}{here}}. For Pong, we only consider the \emph{Preference} scenario. To learn the intent-specific policy, we sampled 2000 trajectories from saved trajectory data. Now we describe the environments as follows:

\begin{table*}[ht]
\caption{Results in 2D Navigation. Undesired/Desired region is the average visit out of a total of 20 time steps. Values marked in \textcolor{blue}{blue} indicate a top 15\% task performance score relative to the best baseline, and those in \textcolor{red}{red} indicate a top 15\% measure relative to the baseline that best adheres to human preferences. $\uparrow$ indicates higher is better and $\downarrow$ indicates lower is better. Results are averaged over 10 seeds, with each seed containing 300 episodes. The hyperparameters used are $T_{\phi} = 0.4$, $T_{\min} = 1$, $T_{\max} = 10$, and $\eta = 0$.}
\centering
\label{tab:2d_static}
\begin{tabular}{l l c c c}
\toprule
Mode & Method & \begin{tabular}{@{}c@{}}Desired region \\ visits $\uparrow$\end{tabular} & \begin{tabular}{@{}c@{}}Undesired region \\ visits $\downarrow$\end{tabular} & \begin{tabular}{@{}c@{}}Score \\ $\uparrow$\end{tabular} \\
\midrule

\multirow{5}{*}{Preference} 
& DQN             & $0.085 \pm 0.005$            & -- & \textcolor{blue}{$1.000 \pm 0.000$} \\
& RUDDER          & \textcolor{red}{$6.546 \pm 0.342$} & -- & $0.006 \pm 0.005$ \\
& MORL            & \textcolor{red}{$9.575 \pm 0.021$} & -- & $0.000 \pm 0.000$ \\
& Static          & $3.788 \pm 0.452$            & -- & $0.602 \pm 0.048$ \\
& Dynamic (Ours)  & $1.459 \pm 0.140$            & -- & \textcolor{blue}{$1.000 \pm 0.000$} \\
\midrule

\multirow{5}{*}{Mixed} 
& DQN             & $0.050 \pm 0.003$            & $0.183 \pm 0.006$ & \textcolor{blue}{$1.000 \pm 0.000$} \\
& RUDDER          & \textcolor{red}{$2.790 \pm 0.311$} & \textcolor{red}{$0.000 \pm 0.000$} & $0.007 \pm 0.007$ \\
& MORL            & $0.192 \pm 0.022$            & $0.119 \pm 0.016$ & $0.396 \pm 0.035$ \\
& Static          & $1.116 \pm 0.318$            & \textcolor{red}{$0.000 \pm 0.000$} & \textcolor{blue}{$0.894 \pm 0.035$} \\
& Dynamic (Ours)  & $0.268 \pm 0.058$            & \textcolor{red}{$0.000 \pm 0.000$} & \textcolor{blue}{$1.000 \pm 0.000$} \\
\midrule

\multirow{5}{*}{Avoidance} 
& DQN             & -- & $0.094 \pm 0.006$ & \textcolor{blue}{$1.000 \pm 0.000$} \\
& RUDDER          & -- & $0.000 \pm 0.000$ & $0.292 \pm 0.055$ \\
& MORL            & -- & $0.026 \pm 0.021$ & \textcolor{blue}{$0.894 \pm 0.099$} \\
& Static          & -- & \textcolor{red}{$0.005 \pm 0.005$} & \textcolor{blue}{$1.000 \pm 0.000$} \\
& Dynamic (Ours)  & -- & \textcolor{red}{$0.006 \pm 0.005$} & \textcolor{blue}{$1.000 \pm 0.000$} \\
\bottomrule
\end{tabular}
\end{table*}


\noindent \textbf{2D Navigation:} This is our custom environment where an agent is tasked with navigating to a target state while avoiding undesired states and visiting desired ones. The observation consists of a $40 \times 40$ grayscale image with four directional movements allowed. Actions that put the agent out of the frame are invalid.  A +1 reward is granted upon reaching the target location, and a 0 reward is given for all other actions. The episode concludes within 20 timesteps or upon reaching the target.
 
 \noindent \textbf{Pong:} In Pong, the observations correspond to ball x (horizontal) and y (vertical) position and velocity, player y position and velocity and CPU y position. Actions involve increasing, decreasing, or changing the velocity of the paddle. We have changed the original environment reward scheme and the termination criteria as follows. A +1 reward is obtained upon winning, a 0 reward for losing the game and 0.5 if the trajectory terminates. Pong concludes after 1000 timesteps or upon game outcome (win or loss).

 \noindent \textbf{Highway:} The goal is to navigate through traffic as fast as possible. The observations consist of a 26-dimensional vector, horizontal (x) and vertical(y) coordinates, corresponding x and y-velocities of five nearby vehicles, and the current lane. The agent can change lanes, stay idle, or adjust speed (move faster or slower). Any action that takes the agent out of frame is void. This environment has a dense reward setting, with positive rewards granted for maximizing speed, and negative rewards incurred upon collision with other vehicles. These rewards are normalized between 0 and 1. Episode termination occurs after 50 timesteps or upon collision with other vehicles.

 \noindent \textbf{Baselines:} We compare our proposed method against 4 baselines. 
 \begin{itemize}
     \item \emph{DQN} \citep{mnih2013playing}: which learns a policy that focuses solely on the  task objective.
     \item \emph{RUDDER} \citep{arjona2019rudder}: which learns a policy focusing on just the human objective.
     \item \emph{Static}: Static fusion method described in Section \ref{sec:fusion} designed to achieve both objectives.
     \item \emph{MORL} \citep{natarajan2005dynamic}: A multi-objective RL baseline scalarises the vector reward of task and human-induced reward by linearisation.
 \end{itemize}

We refer the reader to the supplementary material for details of the hyperparameters used and baseline implementations.

Table \ref{tab:personalisation} summarises the different environments with corresponding human and task objectives. To personalise the behaviour, we assume human users have lane preferences or aversions in Highway. Likewise, in 2D Navigation, humans prefer the agent to avoid or visit certain regions. In Pong, personalisation is introduced by favouring specific paddle positions.

Although Section \ref{sec:methodology} discussed actions being sampled from policies, in practical applications, it is common to exploit greedy actions once the policy is learned. Hence, in all our experiments, we choose actions as $argmax_{a \in \mathcal{A}} \sqrt{\pi_\phi(a|s_t) \times \pi_\psi(a|s_t)}$. This is akin to using a low-temperature parameter setting in the fused Boltzmann policy.

\begin{table*}[h]
\caption{Results for the Highway environment. Undesired/Desired lane columns indicate the average number of visits in 50 time steps. Values marked in \textcolor{blue}{blue} indicate a top 15\% task performance score relative to the best baseline, and those in \textcolor{red}{red} indicate a top 15\% measure relative to the baseline that best adheres to human preferences. $\uparrow$ indicates higher values are better and $\downarrow$ indicates lower values are better for the quantity specified in the column. The hyper-parameters chosen are $T_{\phi} = 0.6, T_{\max} = 5, T_{\min} = 0.3, \eta = 0$.}
\centering
\label{tab:highway}
\begin{tabular}{l l c c c c}
\toprule
Mode & Method & \begin{tabular}{@{}c@{}}Desired lane \\ visits $\uparrow$\end{tabular} & \begin{tabular}{@{}c@{}}Undesired lane \\ visits $\downarrow$\end{tabular} & \begin{tabular}{@{}c@{}}Hits \\ $\downarrow$\end{tabular} & \begin{tabular}{@{}c@{}}Score \\ $\uparrow$\end{tabular} \\
\midrule

\multirow{4}{*}{Avoidance} 
& DQN             & --                 & $10.59 \pm 0.96$ & \textcolor{blue}{$0.11 \pm 0.01$} & \textcolor{blue}{$39.59 \pm 0.46$} \\
& RUDDER          & --                 & \textcolor{red}{$0.00 \pm 0.00$}  & $0.73 \pm 0.11$ & $18.43 \pm 2.54$ \\
& MORL            & --                 & $5.80 \pm 1.07$  & \textcolor{blue}{$0.14 \pm 0.23$} & \textcolor{blue}{$37.44 \pm 0.52$} \\
& Dynamic (Ours)  & --                 & \textcolor{red}{$0.19 \pm 0.11$}  & \textcolor{blue}{$0.09 \pm 0.02$} & \textcolor{blue}{$38.80 \pm 0.51$} \\
\midrule

\multirow{4}{*}{Preference} 
& DQN             & $10.9 \pm 0.49$   & --                & \textcolor{blue}{$0.07 \pm 0.02$} & \textcolor{blue}{$40.25 \pm 0.64$} \\
& RUDDER          & \textcolor{red}{$31.84 \pm 3.45$} & -- & $0.30 \pm 0.06$ & $28.45 \pm 1.65$ \\
& MORL            & $11.58 \pm 1.12$  & --                & \textcolor{blue}{$0.07 \pm 0.01$} & $38.14 \pm 0.30$ \\
& Dynamic (Ours)  & \textcolor{red}{$29.91 \pm 1.38$} & -- & \textcolor{blue}{$0.06 \pm 0.01$} & \textcolor{blue}{$39.27 \pm 0.59$} \\
\midrule

\multirow{4}{*}{Mixed} 
& DQN             & $11.83 \pm 0.64$  & $9.59 \pm 0.82$  & \textcolor{blue}{$0.08 \pm 0.02$} & \textcolor{blue}{$40.06 \pm 0.79$} \\
& RUDDER          & \textcolor{red}{$26.26 \pm 5.90$} & \textcolor{red}{$0.00 \pm 0.00$}  & $0.50 \pm 0.12$ & $24.67 \pm 3.00$ \\
& MORL            & $12.61 \pm 1.20$  & $12.36 \pm 1.58$ & \textcolor{blue}{$0.07 \pm 0.01$} & $36.97 \pm 0.59$ \\
& Dynamic (Ours)  & \textcolor{red}{$25.21 \pm 2.55$} & \textcolor{red}{$0.27 \pm 0.06$}  & \textcolor{blue}{$0.07 \pm 0.04$} & \textcolor{blue}{$39.15 \pm 0.49$} \\
\bottomrule
\end{tabular}
\end{table*}

\subsection{Results}

We begin by illustrating the limitations of static fusion (Section~\ref{sec:pitfall}) and comparing it with our proposed dynamic fusion in the 2D Navigation task.  

In this environment, the metric ``desired region visits'' quantifies adherence to user preference. Conceptually, the desired region (as specified by the human intent) may represent a checkpoint that the user expects the agent to acknowledge en route to the target. If the agent remains in this region indefinitely, it indicates over-dominance of the intent-specific policy, preventing progress toward the actual goal. An ideal personalised policy should therefore acknowledge the desired region but continue to complete the navigation task. As shown in Table~\ref{tab:2d_static}, our dynamic fusion achieves exactly this balance: the agent briefly visits the desired region to respect the user’s preference but then proceeds reliably toward the target. This stands in contrast to static fusion and RUDDER, both of which over-emphasise user intent and consequently fail to complete the task.  

The performance of static fusion is also highly sensitive to the temperature parameter $T_\psi$. We evaluated several values and found $T_\psi = T_{\max}/2$ to be the most effective across all three modes. As shown in Table~\ref{tab:2d_static}, static fusion performs reasonably well in the Avoidance mode, successfully avoiding undesired regions while maintaining strong task scores. However, in the Preference and Mixed modes, it exhibits degradation: the agent becomes trapped in the desired region, resulting in poor task performance. This behaviour again reflects the over-dominance of the intent-specific policy under static fusion.

\begin{table}[h]
\caption{\label{tab:pong} Results in the Pong environment with $\eta=0$. Values marked in \textcolor{blue}{blue} indicate a top 15\% task performance score relative to the best baseline, and those in \textcolor{red}{red} indicate a top 15\% measure relative to the baseline that best adheres to human preferences. \% Desired region is the average fraction of an episode (in percentage) that the agent stayed within the desired region shown in Fig~\ref{fig:2d_pool}(b).}
\centering
\begin{tabular}{l c c}
\toprule
Method & \begin{tabular}{@{}c@{}}Desired region \\ \% $\uparrow$\end{tabular} & \begin{tabular}{@{}c@{}}Score \\ $\uparrow$\end{tabular} \\
\midrule
DQN             & $8.00 \pm 0.63$   & \textcolor{blue}{$0.57 \pm 0.01$} \\
RUDDER          & \textcolor{red}{$89.99 \pm 0.14$} & $0.00 \pm 0.00$ \\
MORL            & $41.68 \pm 0.02$ & $0.00 \pm 0.00$ \\
Dynamic (Ours)  & $55.60 \pm 4.48$ & \textcolor{blue}{$0.48 \pm 0.02$} \\
\bottomrule
\end{tabular}
\end{table}

\begin{table*}[h]
\caption{Result when the user intent directly conflicts with the task objective in Highway in the preference case. The result is averaged over 5 seeds.}
\centering
\label{tab:highway_conflict}
\begin{tabular}{l c c c c} 
\toprule
Method & \begin{tabular}{@{}c@{}}Desired lane \\ \% $\uparrow$\end{tabular} & \begin{tabular}{@{}c@{}}Speed violations \\ \% $\downarrow$\end{tabular} & \begin{tabular}{@{}c@{}}Hits \\ $\downarrow$\end{tabular} & \begin{tabular}{@{}c@{}}Score \\ $\uparrow$\end{tabular} \\
\midrule
DQN & $18.2 \pm 4.24$ & $15.72 \pm 1.07$ & $0.04 \pm 0.02$ & $38.84 \pm 0.51$ \\ 
Dynamic (Ours) & $57.14 \pm 8.52$ & $0.00 \pm 0.00$ & $0.02 \pm 0.02$ & $34.92 \pm 0.70$ \\ 
\bottomrule
\end{tabular}
\end{table*}

\begin{table*}[h]
\caption{Result with binary feedback in Highway in the preference case. The result is averaged over 5 seeds.}
\centering
\label{tab:highway_binary}
\begin{tabular}{l c c c} 
\toprule
Method & \begin{tabular}{@{}c@{}}Desired lane \\ \% $\uparrow$\end{tabular} & \begin{tabular}{@{}c@{}}Hits \\ $\downarrow$\end{tabular} & \begin{tabular}{@{}c@{}}Score \\ $\uparrow$\end{tabular} \\
\midrule
DQN & $21.8 \pm 0.98$ & $0.07 \pm 0.02$ & $40.25 \pm 0.64$ \\
Dynamic-Binary feedback (Ours) & $60.78 \pm 13.66$ & $0.06 \pm 0.02$ & $38.28 \pm 0.58$ \\
\bottomrule
\end{tabular}
\end{table*}

\begin{table*}[h]
\caption{Result with binary feedback in Pong in the preference case. The result is averaged over 5 seeds.}
\centering
\label{tab:pong_binary}
\begin{tabular}{l c c c} 
\toprule
Method & \begin{tabular}{@{}c@{}}Desired region \% \\ \% $\uparrow$\end{tabular} & \begin{tabular}{@{}c@{}}Score \\ $\uparrow$\end{tabular} \\
\midrule
DQN             & $8.00 \pm 0.63$   & $0.57 \pm 0.01$ \\
Dynamic-Binary feedback (Ours) & $42.54 \pm 5.03$  & $0.54 \pm 0.05$ \\
\bottomrule
\end{tabular}
\end{table*}

The behaviour of the other baselines is consistent with expectations. DQN focuses solely on maximising the task score, while RUDDER overfits to human preferences at the expense of task success. MORL, which attempts to reconcile both objectives, is less effective than our method. In particular, MORL struggles in scenarios such as Mixed 2D Navigation, where conflicting objectives arise. By seeking a global compromise, MORL fails to respond to local preference requirements, whereas our dynamic fusion adapts on the fly, applying corrections when human adherence is necessary. Overall, our method achieves a robust trade-off, incorporating both task and preference signals across all modes of personalisation.

These findings highlight the limitations of static fusion and MORL, and establish the necessity of dynamic policy fusion to satisfy both human and task objectives simultaneously.

In the Highway environment (Table \ref{tab:highway}), the expected behaviours of the baselines are again evident: DQN prioritises task performance, achieving high scores and minimal collisions, while RUDDER strongly favours human intent, spending the majority of time in the preferred lane but incurring more collisions and lower task scores. MORL attempts to balance both but falls short, as seen from its relatively high number of undesired lane visits in Avoidance and Mixed modes, and fewer desired lane visits in Preference and Mixed modes.

By contrast, our dynamic fusion method consistently balances task success with user alignment. In Avoidance, it minimises collisions and rarely enters the undesired lane while maintaining scores close to DQN. In Preference, it spends significantly more time in the desired lane than DQN or MORL, without compromising speed or safety. In Mixed, it adheres to both the lane preference and avoidance objectives simultaneously while sustaining strong task performance. These results highlight that dynamic fusion enables local corrections—adjusting the influence of user preferences adaptively—rather than committing to a fixed global trade-off as in MORL.

Turning to the Pong environment (Table \ref{tab:pong}), we evaluate the Preference mode, where the user favours keeping the paddle within a central region. Here, DQN achieves the highest task score but spends little time in the preferred region, while RUDDER almost exclusively remains in the preferred region at the cost of completely failing the task. MORL performs somewhat in between, but still fails to achieve meaningful task performance. Our dynamic fusion method strikes a balance: the agent spends a substantial proportion of time in the preferred region while still maintaining competitive scores, demonstrating effective alignment with user intent without sacrificing overall task viability.

Together, these results across 2D Navigation, Highway, and Pong consistently confirm that dynamic fusion outperforms static fusion, RUDDER, and MORL by adaptively balancing task objectives with user intent. While occasionally incurring a small drop in optimal task score relative to DQN, our approach achieves meaningful personalisation, ensuring that agents respect user preferences while still completing the primary task. Visualisation of the trajectories are available \href{https://github.com/Ajsal-Shereef/Personalization}{here}.

We further evaluated the robustness of our approach under two challenging settings:

\textbf{What if user intent directly conflicts with the task objective?} In the Highway conflict scenario (Table~\ref{tab:highway_conflict}), the user preference was to remain in the third lane while not exceeding a specified speed limit, which directly opposes the task objective of maximising speed without collisions. As expected, DQN prioritises speed and achieves the highest task score but only partially adheres to the lane preference and often violates the speed constraint. In contrast, our dynamic fusion method successfully maintains the lane preference and respects the speed limit, while still avoiding collisions. Although this leads to a lower overall task score (due to reduced speed), the outcome is consistent with our theoretical guarantees (Theorem~\ref{the:bound}): when the user and task objectives diverge within a bounded range, the resulting fused policy will also deviate from the optimal task policy in a controlled, bounded manner.

\textbf{What if user feedback is binary?} We investigated the use of binary feedback to reduce annotation effort and cognitive load for users. Instead of providing graded scores, trajectories were simply labelled as 1 if they contained the preferred states for more than 25\% of the time, and 0 otherwise. As shown in Table ~\ref{tab:highway_binary} (Highway) and Table ~\ref{tab:pong_binary} (Pong), our dynamic fusion method remains effective even under this weaker supervision, producing behaviour that aligns with preferences while sustaining competitive task performance. This suggests that our approach is robust to coarse feedback and can remain practical in real-world scenarios where fine-grained labels are costly or noisy.

Together, these additional studies highlight the flexibility of dynamic fusion: it handles conflicting objectives with a controlled drop in performance and also performs reliably under binary human feedback.

\section{Limitations and Future Work}
\label{sec:limitation}
Even though our proposed dynamic policy fusion approach dynamically optimises two objectives, in its current state, it is restricted to discrete action spaces. Extension to continuous action domains could be explored in future work. Additionally, the choice of hyper-parameters is done using trial and error; reducing or learning the number of parameters will enhance the utility of the method. Our method does not capture the varying preferences of the human. To address this limitation, future research could explore adaptive methods, perhaps based on meta-RL \citep{beck2023survey} to adjust to changing human preferences with a few shot learning of the LSTM network. 
The experiments presented in this work also make use of a noise-free ground truth function to simulate human feedback. We acknowledge that human feedback may be subject to noise, which could affect the robustness of our presented approach.  Further research is necessary to adapt our approach to be more robust to noise, the performance of which could then be evaluated with noisy feedback from real humans. Another practical aspect to consider is that the number of trajectories one could expect a human to label is limited. From an initial investigation, our approach does exhibit a drop in performance as fewer labelled trajectories become available. Although this problem could partially be mitigated through the use of semi-supervised methods \citep{van2020survey} to automatically generate trajectory score labels, we recognise that more noise-tolerant and feedback-efficient variants of our method need to be explored. Nevertheless, we believe the proposed dynamic policy fusion method presents a novel and an in-principle, practical approach for adapting and personalising already trained policies.
\section{Conclusion}
\label{sec:conclusion}
We proposed a novel approach to personalise a trained policy to respect human-specified preferences using trajectory-level human feedback, without any additional interactions with the environment. Using an LSTM-based approach to infer human intent, we designed a theoretically grounded dynamic policy fusion method to ensure the resulting policy completes a given task while also respecting human preferences. We empirically evaluated our approach on the Highway, 2D Navigation and Pong environments and demonstrated that our approach is capable of handling various modes of intent while only minimally compromising the task performance. We believe our approach presents an elegant and scalable solution to the problem of personalising pretrained policies. With a growing focus on personalisation in applications such as chatbots, robotic assistants, self-driving vehicles, etc., we believe our approach has the potential for imminent and widespread impact. 

\section{Statements and Declarations}
\textbf{Funding}
The authors declare that no funds, grants, or other support were received during the preparation of this manuscript.\\
\textbf{Competing Interests}
The authors have no relevant financial or non-financial interests to disclose.\\
\textbf{Author Contributions}
All authors contributed to the study conception, brainstorming and formulating the idea. Material preparation, data collection, experimentation and analysis were performed by Ajsal Shereef Palattuparambil. The first draft of the manuscript was written by Ajsal Shereef Palattuparambil, and all authors commented on previous versions of the manuscript. All authors read and approved the final manuscript.\\
\textbf{Ethics Approval and Consent to Participate}
Research does not involve  Human Participants and/or Animals.\\
\textbf{Consent for Publication} Yes.\\
\textbf{data availability statement}
The code used for the experimentation and analysis is available \href{https://github.com/Ajsal-Shereef/Personalization}{here}.

\clearpage
\bibliography{sn-bibliography}
\clearpage
\begin{appendices}

\section{Simulating human feedback}
\label{sec:human_feedback}
Our method involves collecting human feedback for trajectories used for training the task-specific policy. This is done by simulating humans. We conducted experiments in three personalisation modes: preference, avoidance, and mixed. In each case, we counted the number of times the agent met the personalisation criteria within a trajectory, and this count is regarded as the ground truth score for that trajectory. For instance, in 2D navigation, if the human wishes the agent to visit a preferred region, we count the number of times the agent visited that state and a positive score is given to that trajectory. Similarly, in the avoidance case, a negative score is given, and in the mixed case, we sum both the positive and negative counts.

Applying this method in a real-world context may introduce additional considerations. A key practical aspect is how trajectories are presented to a human for labelling while upholding the claim of no additional environment interactions. This can be addressed by allowing the user to passively view replays (i.e., recordings) of trajectories that were already collected during the task policy’s training. Since the agent does not actively generate new state-action transitions, this replay process does not constitute additional interaction. 

In practice, the number of trajectories that can be labelled by a human may be limited, and labelling effort could be influenced by factors such as fatigue or subjective bias. While our experiments used a noise-free simulated human, future work could incorporate techniques to mitigate such issues, for example by employing semi-supervised learning, active learning, or noise-robust feedback aggregation methods. We view these extensions as promising directions to further enhance the practicality of our approach in real-world scenarios.

\section{Training of LSTM}
\label{sec:lstm_training}
As described in the main paper, we used the same trajectory training data of the task-specific policy to train the LSTM. We sampled 2000 trajectories along with the human feedback as the training data. 

The input to the LSTM is the state and the action chosen at time $t$ and it outputs the corresponding Q-values. We made a custom one-hot encoding of the action to nearly match the dimension of the state vector. For example, in the Highway environment, the state vector is 26 dimensional and there are 5 actions. We encode the selected action to a 25-dimensional vector with one filling in the corresponding 5 positions of the action and the rest being zero. We then condition the state vector with the action vector and train the LSTM. An overview of this dataflow is outlined in Fig \ref{fig:lstm_data}.

The LSTM network architecture is single-layered with 64 units. The output gate and forgot gate of the LSTM are set to 1 as in \cite{arjona2019rudder} as we don't want to forget anything while performing the credit assignment. The optimiser details are provided in Table \ref{tab:optimizer}.
We used the same loss functions as that of \cite{arjona2019rudder} to train the LSTM. 
\begin{eqnarray}
\label{eq:main}
L_m = (l-\Tilde{Q}_H)^2,
\end{eqnarray}
\begin{eqnarray}
L_c = \frac{1}{H+1}\sum_{t=0}^H(l-\Tilde{Q}_t)^2,
\end{eqnarray}
\begin{eqnarray}
L_e = \frac{1}{H-\delta+1}\sum_{t=0}^{H-\delta}(\Tilde{Q}_{t+\delta}-\beta)^2,
\end{eqnarray}
where $l$ is the human label and $\Tilde{Q}_t$ is the LSTM output at timestep $t$ and $H$ is the final timestep. $\beta$ is the prediction of $\Tilde{Q}_{t+\delta}$ at timestep $t$. $\delta$ is the forward lookup which is set to 3 in all our experiments.

\begin{figure}
    \centering
    \includegraphics[width=1\linewidth]{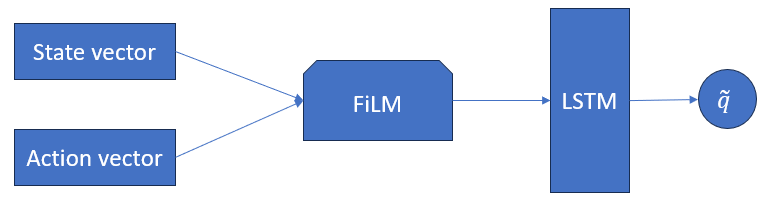}
    \caption{LSTM network training overview. The state vector and the action vector are fed to the FiLM network. The FiLM network conditions the action vector with a state vector and this conditioned vector is the input to the LSTM. LSTM outputs the estimated q-value.}
    \label{fig:lstm_data}
\end{figure}

The final loss function is,
\begin{eqnarray}
\label{eq:total loss}
L = L_m + \frac{1}{10}(L_c+L_e).
\end{eqnarray}
The $L_c$ term aims to allocate full credit to each time step without considering its impact on future steps. Basically, it constrains the expected future reward to zero. To get $\beta$ in $L_e$, we added a separate head to the LSTM network which predicts the Q-values at $t+\delta$ at time $t$. Both $L_e$ and $L_c$ act as auxiliary loss functions for reward redistribution.

\begin{table}[h]
\caption{\label{tab:optimizer} LSTM optimizer details.}
\centering
\begin{tabular}{l l}
\toprule
\textbf{Particulars} & \textbf{Value} \\
\midrule
Learning rate      & $10^{-4}$ \\
Weight decay       & $10^{-8}$ \\
Gradient clipping  & $10$ \\
Optimizer          & Adam \\
\bottomrule
\end{tabular}
\end{table}

\begin{table}[h]
\caption{\label{tab:optimizer_vae} VAE optimizer details.}
\centering
\begin{tabular}{l l}
\toprule
\textbf{Particulars} & \textbf{Value} \\
\midrule
Learning rate      & $10^{-3}$ \\
Weight decay       & $10^{-6}$ \\
Gradient clipping  & $10$ \\
Optimizer          & Adam \\
\bottomrule
\end{tabular}
\end{table}

\begin{table*}[h]
\caption{\label{tab:vae_encoder} VAE encoder architecture.}
\centering
\begin{tabular}{l c c c c c}
\toprule
\textbf{Conv layer} & \textbf{Input depth} & \textbf{Output depth} & \textbf{Kernel size} & \textbf{Stride} & \textbf{Padding} \\
\midrule
Conv 1 & 1   & 16  & 3 & 2 & 0 \\
Conv 2 & 16  & 32  & 3 & 2 & 0 \\
Conv 3 & 32  & 64  & 3 & 1 & 0 \\
Conv 4 & 64  & 128 & 3 & 1 & 0 \\
Conv 5 & 128 & 256 & 3 & 1 & 1 \\
\bottomrule
\end{tabular}
\end{table*}

\begin{table*}[h]
\caption{\label{tab:vae_decoder} VAE decoder architecture.}
\centering
\begin{tabular}{l c c c c c}
\toprule
\textbf{ConvT layer} & \textbf{Input depth} & \textbf{Output depth} & \textbf{Kernel size} & \textbf{Stride} & \textbf{Padding} \\
\midrule
ConvT 1 & 256 & 128 & 3 & 1 & 1 \\
ConvT 2 & 128 & 64  & 3 & 1 & 0 \\
ConvT 3 & 64  & 32  & 3 & 2 & 0 \\
ConvT 4 & 32  & 16  & 5 & 1 & 0 \\
ConvT 5 & 16  & 1   & 4 & 2 & 0 \\
\bottomrule
\end{tabular}
\end{table*}

Experiments were conducted across three distinct environments, each employing different modes of personalisation. For the 2D Navigation and Highway environments, we implemented experiments in avoidance, preference, and mixed modes. However, for the Pong environment, personalisation was conducted in preference mode. The LSTM loss function for all modes across the 2D Navigation, Highway, and Pong environments is depicted in Fig \ref{fig:2d_loss}, \ref{fig:highway_loss}, and \ref{fig:pong_loss}, respectively.

Given that the horizon for the Pong environment is 1000, which is relatively long, we opted to train the LSTM using snippets of the trajectory that contain 50 timesteps. We simulated the feedback for these short snippets. For all other environments, we utilized the full-length trajectory in our training process.

\begin{figure}
    \centering
    \begin{subfigure}{0.4\textwidth}
        \includegraphics[width=\textwidth]{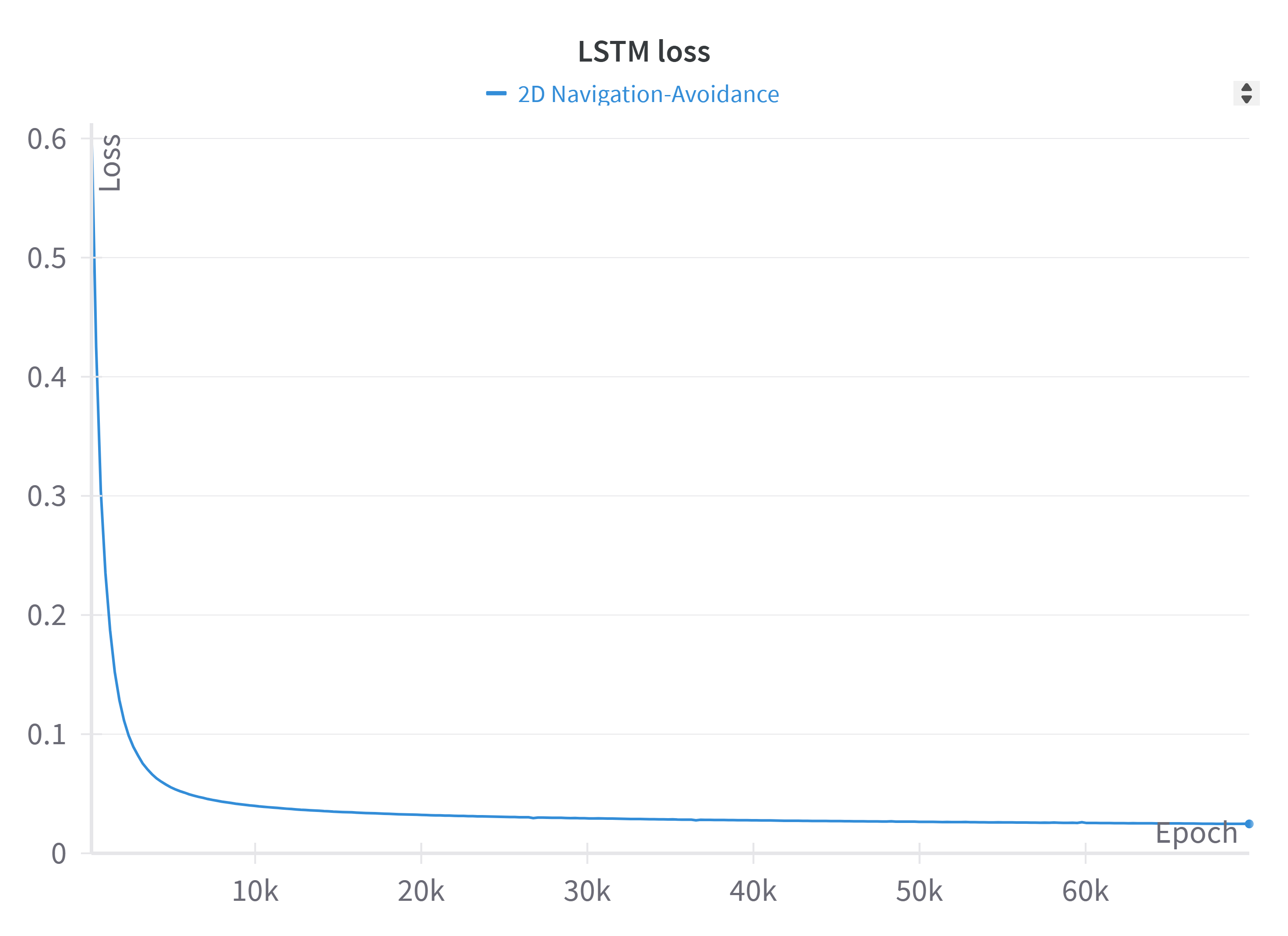}
        \caption{Avoidance}
    \end{subfigure}
    
    \vspace{1em} 

    \begin{subfigure}{0.4\textwidth}
        \includegraphics[width=\textwidth]{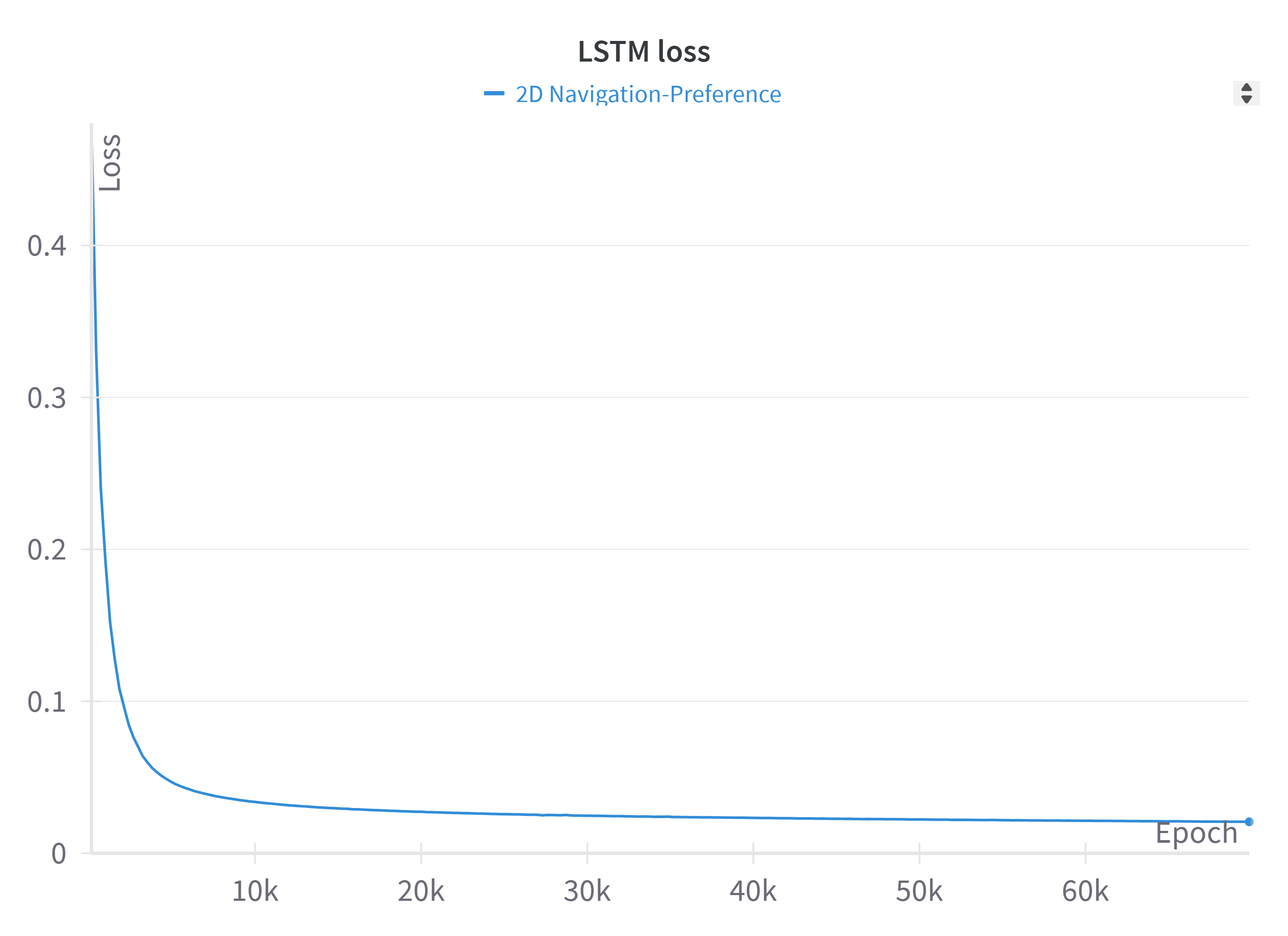}
        \caption{Preference}
    \end{subfigure}
    
    \vspace{1em} 

    \begin{subfigure}{0.4\textwidth}
        \includegraphics[width=\textwidth]{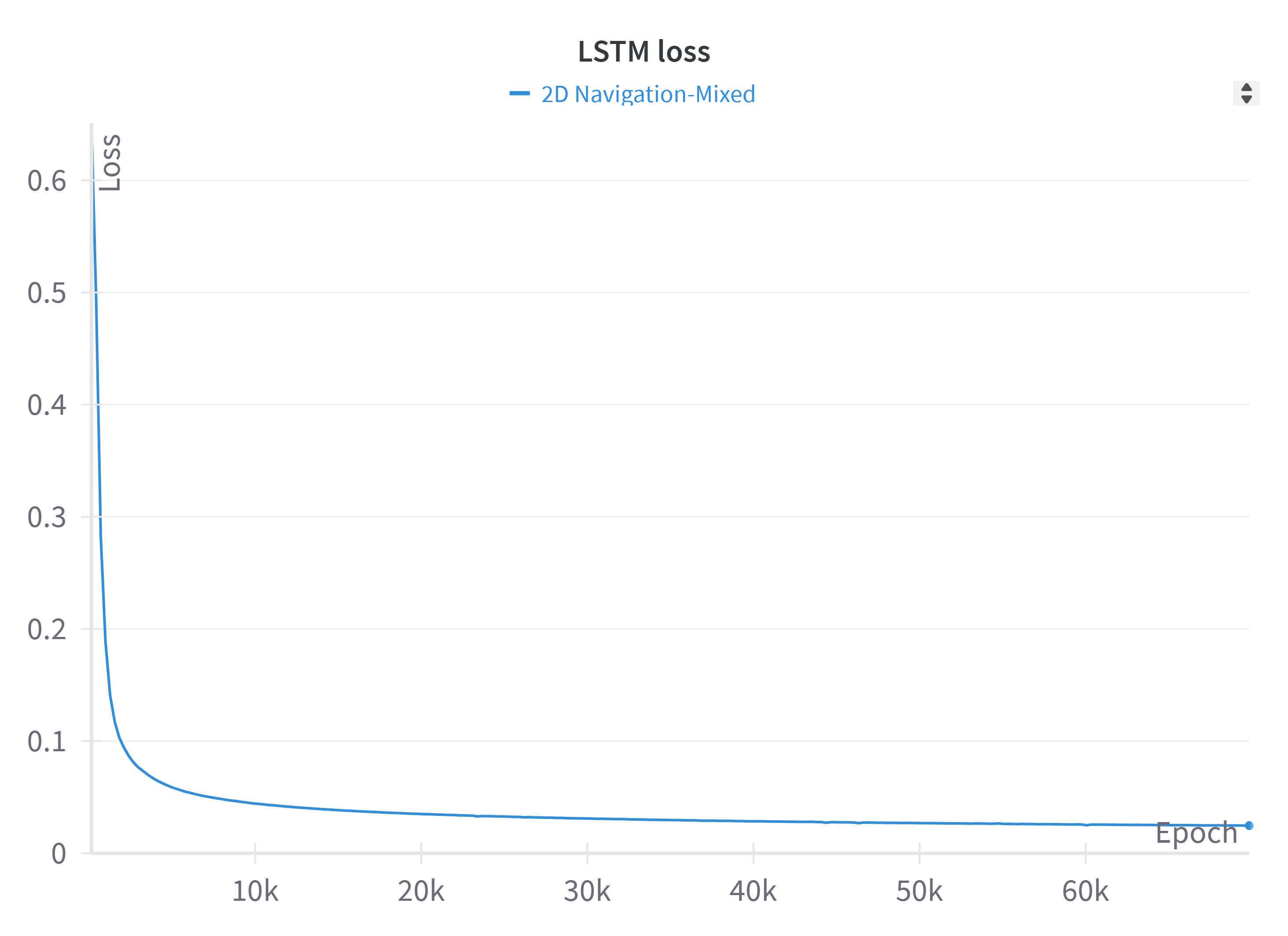}
        \caption{Mixed}
    \end{subfigure}
    \caption{LSTM loss function of 2D navigation across all modes of personalisation}
    \label{fig:2d_loss}
\end{figure}

\begin{figure}
    \centering
    \begin{subfigure}{0.4\textwidth}
        \includegraphics[width=\textwidth]{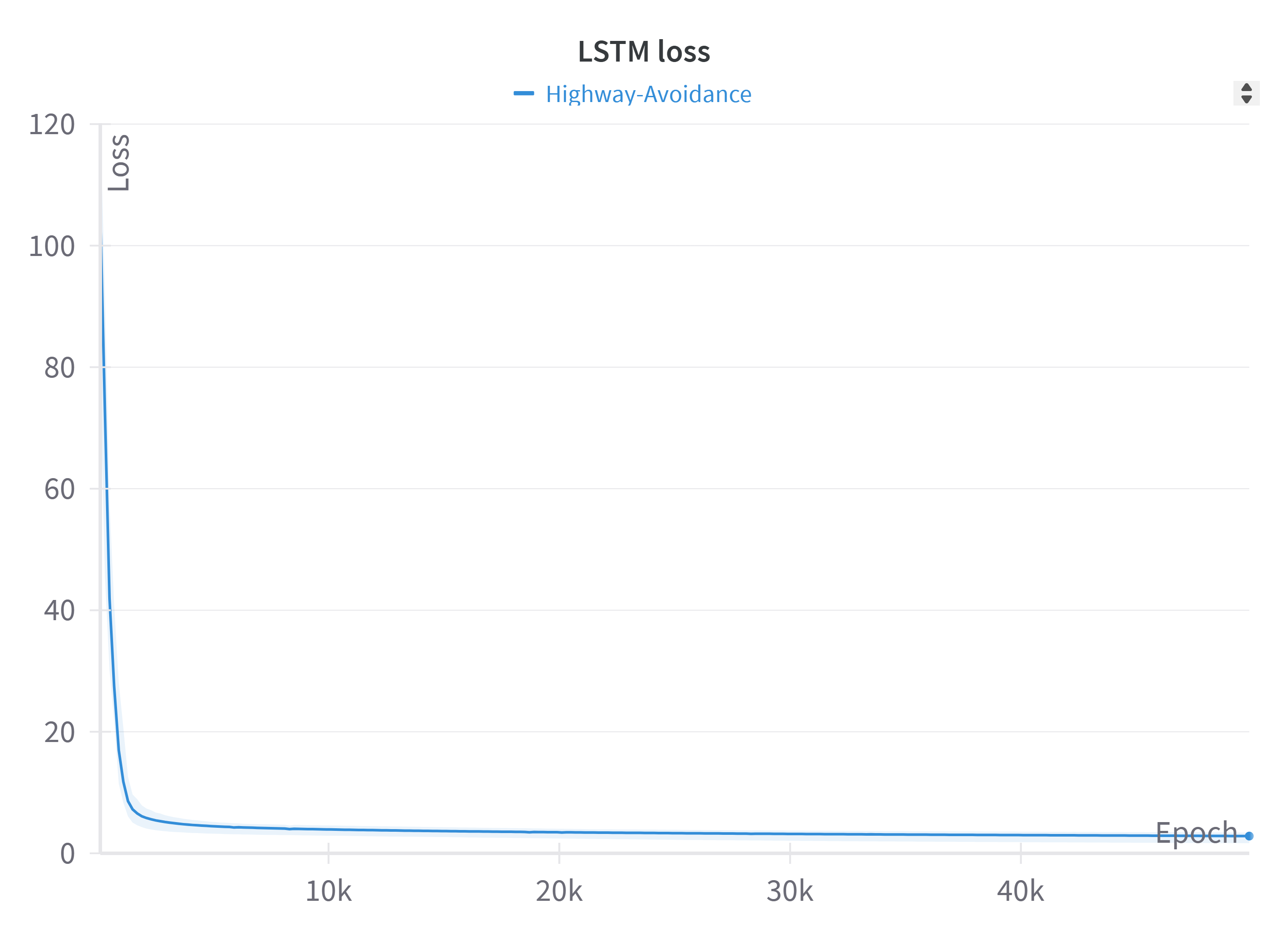}
        \caption{Avoidance}
        \label{fig:sub1}
    \end{subfigure}
    
    \vspace{1em} 

    \begin{subfigure}{0.4\textwidth}
        \includegraphics[width=\textwidth]{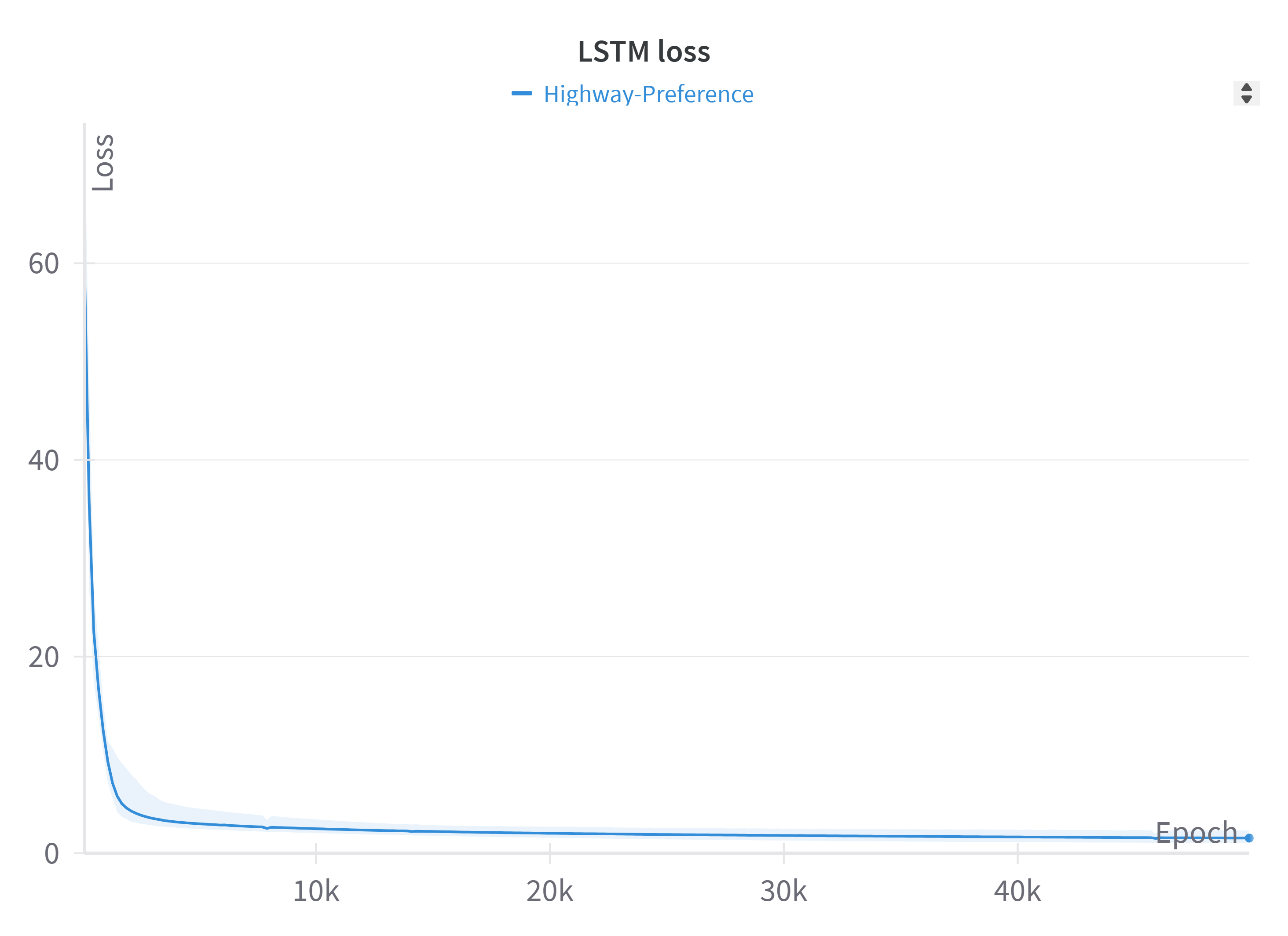}
        \caption{Preference}
        \label{fig:sub2}
    \end{subfigure}
    
    \vspace{1em} 

    \begin{subfigure}{0.4\textwidth}
        \includegraphics[width=\textwidth]{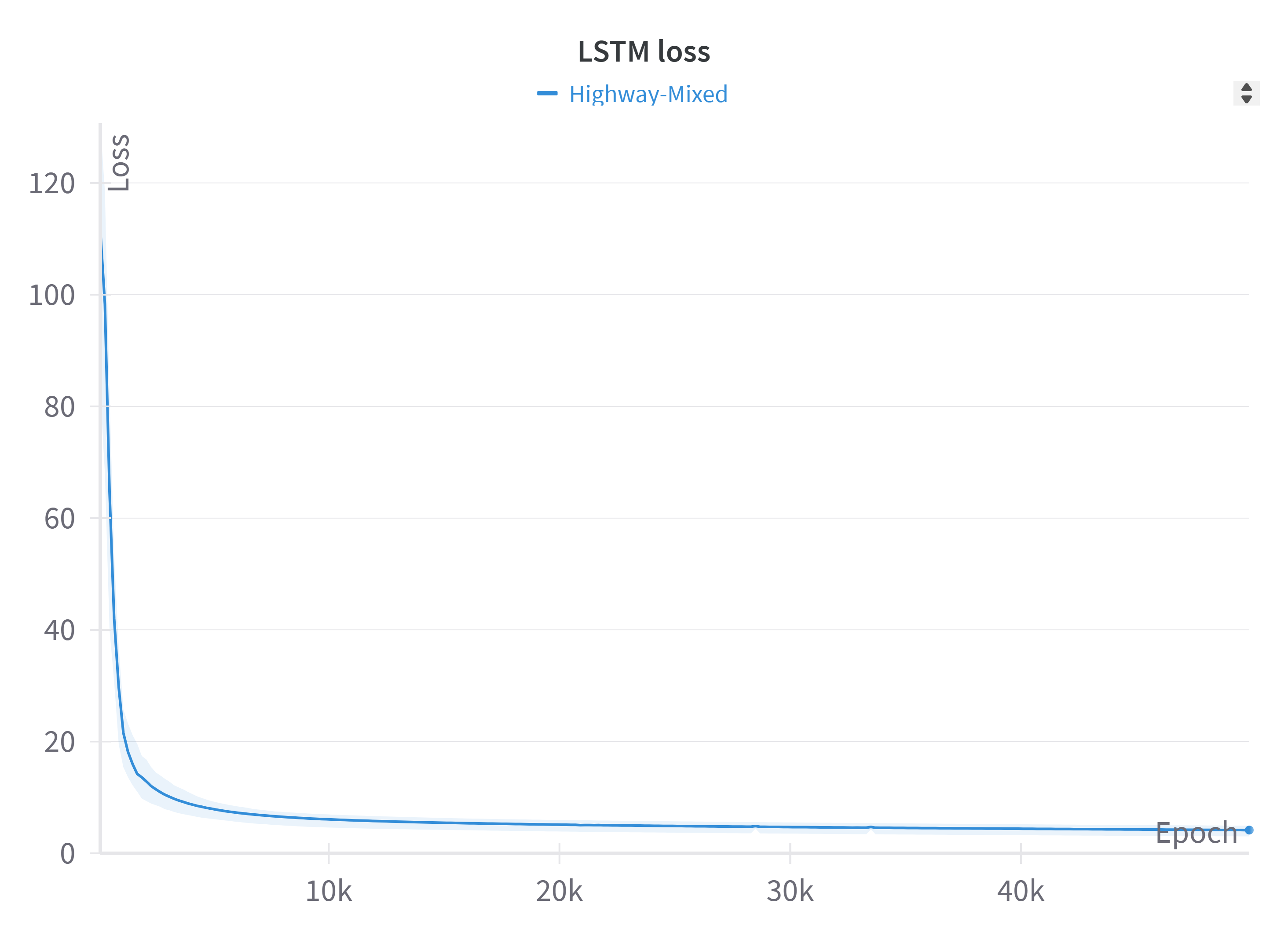}
        \caption{Mixed}
        \label{fig:sub3}
    \end{subfigure}
    \caption{LSTM loss function of Highway across all modes of personalisation}
    \label{fig:highway_loss}
\end{figure}

\begin{figure}
    \centering
    \includegraphics[width=0.3\textwidth]{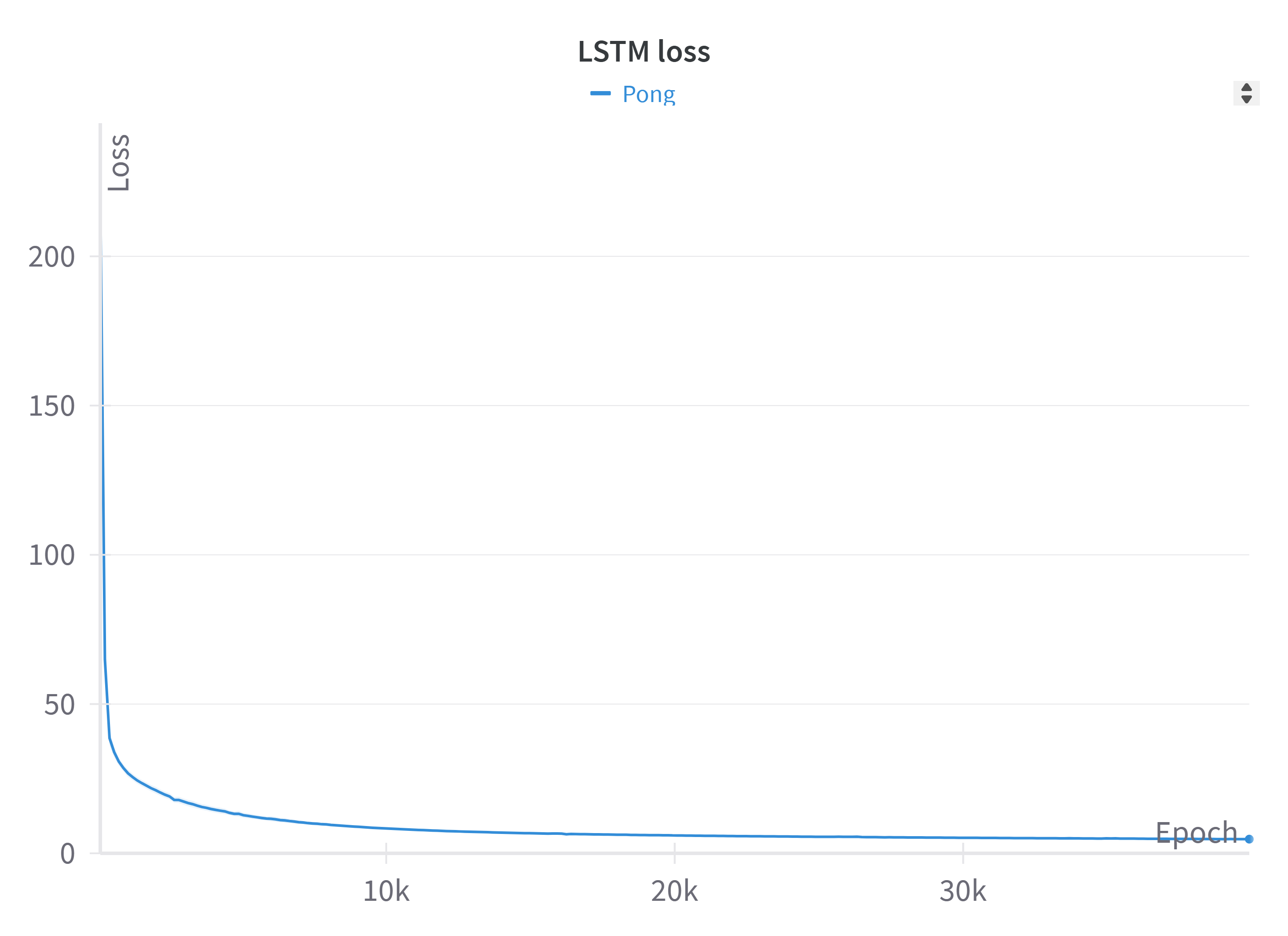}
    \caption{LSTM loss function of Pong}
    \label{fig:pong_loss}
\end{figure}

\begin{figure}
    \centering
    \includegraphics[width=0.3\textwidth]{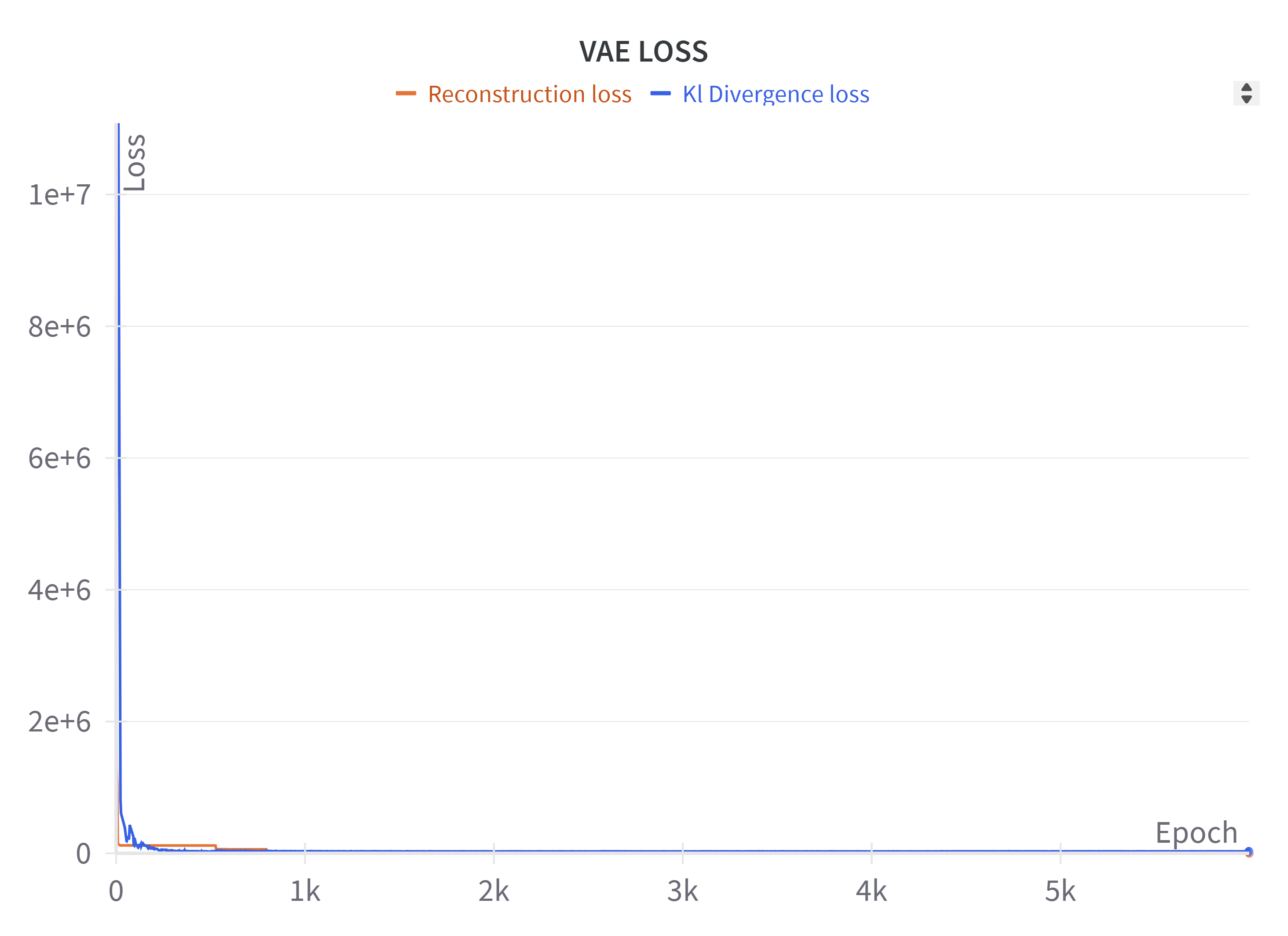}
    \caption{VAE loss function. The scale of the loss is high because the loss is summed over the data for each mini batch}
    \label{fig:vae_loss}
\end{figure}

\section{Training of VAE}
\label{sec:vae_training}
VAE is pretrained and only used if the observation of the environment is pixel-based. Training data is the same as the training data of the task policy. We use the conventional VAE loss functions which consist of reconstruction loss and KL-divergence loss. The encoder and decoder architectures are tabulated in Tables \ref{tab:vae_encoder}, \ref{tab:vae_decoder} respectively. A fully connected linear layer is used to map the output from the encoder to a dimension of 512 subsequently the output is mapped to a latent dimension (set as 16) using another linear layer. Optimiser details of the VAE is shown in Table \ref{tab:optimizer_vae}. The loss curve is shown in Fig \ref{fig:vae_loss}.

\section{Different choices of static fusion}
\label{sec:sestini_fusion}
Policy fusion is a process of combining two or more policies to produce a new policy. \cite{sestini2021policy} studied various policy fusion methods which are as follows.

\subsection{Product Policy}

The fused policy is obtained by multiplying different policies. The action selection process from the $\pi_f$ is as follows:

\begin{eqnarray}
\arg\max_{a \in \mathcal{A}} \pi_\phi(a|s_t) \times \pi_\psi(a|s_t)
\end{eqnarray}

\subsection{Mixture Policy}
In this fusion method, the fused policy is obtained by taking the average of the policies. The action selection process is expressed as follows:

\begin{eqnarray}
\arg\max_{a \in \mathcal{A}} \frac{\pi_\phi(a|s_t) + \pi_\psi(a|s_t)}{2}
\end{eqnarray}

\subsection{Entropy Threshold}
This method selects the action based on the entropy of each policy at state $s_t$. Let $\mathcal{H}{\phi}(s_t)$ represent the entropy of the task policy, and $\mathcal{H}{\psi}(s_t)$ represent the entropy of the Intent-specific policy at state $s_t$. The action is selected according to the following conditions:
\begin{eqnarray}
\begin{cases}
\arg\max_{a \in \mathcal{A}} \pi_\psi(a|s_t) & \text{If } \mathcal{H}{\psi}(s_t) < \mathcal{H}{\phi}(s_t) + \epsilon\\
\arg\max_{a \in \mathcal{A}} \pi_\phi(a|s_t) & Otherwise
\end{cases}
\end{eqnarray}
Here, $\epsilon$ is a small positive value.

\subsection{Entropy Weighted}
This fusion method computes a weighted average of the two policies, where the weights are determined by the minimum entropy among the two policies. Let $\mathcal{H}^{*} = \min\left(\mathcal{H}{\psi}(s_t), \mathcal{H}{\phi}(s_t)\right)$. The action is selected as follows:
\begin{eqnarray}
\arg\max_{a \in \mathcal{A}} \left(\mathcal{H}^{*} \times \pi_\phi(a|s_t) + (1 - \mathcal{H}^{*}) \times \pi_\psi(a|s_t)\right)
\end{eqnarray}
 The term $\mathcal{H}^{*} \times \pi_\phi(a|s_t)$ represents the weighted contribution of the task policy, and $(1 - \mathcal{H}^{*}) \times \pi_\psi(a|s_t)$ represents the weighted contribution of the Intent-specific policy.

\section{Proof of main theorem and lemma}
\label{sec:proofs}

\begin{theorem-restate}[\ref{the:bound}]
    Let $Q$ and $Q'$ be Q-values corresponding to the task policy and the intent-specific policy respectively. Let $\pi_\psi(a|s)$ and $\pi_\phi(a|s)$ represent the respective policies, with corresponding temperatures $T_\psi$ and $T_\phi$. Let $\norm{Q(s,a) - Q'(s,a)}_2 < \epsilon \ \forall s\in \mathcal{S} \ and \ a\in\mathcal{A}$  and $\norm{T_\psi-T_\phi}_2 < \delta$. Then, 
    \begin{center}
        \(KL\left(\pi_\phi(a|s) \middle\| \pi_f(a|s)\right) \leq\log \left( Z\right) + \dfrac{1}{2}
            \left(\dfrac{Q^{*}\delta + \epsilon T_\phi}{T_\phi T_\psi} \right)
            +\dfrac{1}{2}\log\left(\zeta\right) \ \forall a\in\mathcal{A}, s\in\mathcal{S}\),
    \end{center}
    where $Z$ is the normalising factor of $\pi_f$, $Q^{*} = max_{a \in \mathcal{A}}Q(s, a)$ and $\zeta = \dfrac{h(Q',T_\psi)}{h(Q,T_\phi)}$, here $h(Q,T) = \sum_aexp\left(\dfrac{Q}{T}\right)$.
\end{theorem-restate}
\begin{proof}
        \begin{align*}
            &KL\left(\pi_\phi \middle\| \pi_f\right)=
             \sum _{a}\pi _{\phi }\log \left( \dfrac{\pi_\phi Z}{\pi_f}\right) \\
            &=\log \left( Z\right) + \sum _{a}\pi_\phi
            \left(\log\left(\sqrt{e^{\dfrac{Q}{T_\phi}-\dfrac{Q'}{T_\psi}}}\right) +\log\left(\sqrt{\zeta}\right)\right) \\
            &=\log \left( Z\right) + \sum _{a}\dfrac{\pi_\phi}{2}
            \left(\dfrac{Q}{T_\phi}-\dfrac{Q'}{T_\psi} +\log\left(\zeta\right)\right) \\
            &=\log \left( Z\right) + \sum _{a}\dfrac{\pi_\phi}{2}
            \left(\dfrac{Q}{T_\phi}-\dfrac{Q'}{T_\psi} \right)
            +\dfrac{1}{2}\log\left(\zeta\right) \\
            &=\log \left( Z\right) + \sum _{a}\dfrac{\pi_\phi}{2}
            \left(\dfrac{QT_\psi-Q'T_\phi}{T_\phi T_\psi} \right)
            +\dfrac{1}{2}\log\left(\zeta\right) \\
            &=\log \left( Z\right) + \sum _{a}\dfrac{\pi_\phi}{2}
            \left(\dfrac{QT_\psi-Q'T_\phi +QT_\phi - QT_\phi}{T_\phi T_\psi} \right) \\
            &+\dfrac{1}{2}\log\left(\zeta\right) \\
            &\leq\log \left( Z\right) + \sum _{a}\dfrac{\pi_\phi}{2}
            \left(\dfrac{Q\delta + \epsilon T_\phi}{T_\phi T_\psi} \right)
            +\dfrac{1}{2}\log\left(\zeta\right)\\
            &\leq\log \left( Z\right) + \dfrac{1}{2}
            \left(\dfrac{Q^{*}\delta + \epsilon T_\phi}{T_\phi T_\psi} \right)
            +\dfrac{1}{2}\log\left(\zeta\right)
        \end{align*}
    \end{proof}

\begin{lemma-restate}[\ref{lm:invariability}]
    Let $\pi_\psi$ and $\pi_\phi$ be intent-specific and task-specific policies respectively, and assume $\pi_\psi$ is not a random policy. Then $\pi_\phi$ and $\tfrac{1}{Z}\pi_\psi\pi_\phi$, where $Z$ is a normalising factor, are not invariant policies under any condition.
\end{lemma-restate}

\begin{proof}
We compute the KL divergence between $\pi_\phi$ and $\tfrac{1}{Z}\pi_\psi\pi_\phi$:
\begin{align*}
    KL\!\left( \pi _{\phi }\;\middle\|\;\tfrac{\pi _{\phi }\pi _{\psi }}{Z}\right) 
    &= \sum _{a}\pi _{\phi}(a)\log \left( \frac{\pi_\phi(a)}{\tfrac{1}{Z}\pi _{\phi }(a)\pi _{\psi }(a)}\right) \\
    &= \sum _{a}\pi _{\phi }(a)\log \left( \frac{Z}{\pi _{\psi }(a)}\right) \\
    &= \log(Z) - \sum _{a}\pi _{\phi }(a)\log \pi _{\psi }(a),
\end{align*}
where the normalising factor is 
\[
Z=\sum _{a}\pi _{\phi }(a)\pi _{\psi }(a).
\]

For $\pi_\phi$ and $\tfrac{1}{Z}\pi_\psi\pi_\phi$ to be invariant, we require
\[
KL\!\left( \pi _{\phi }\;\middle\|\;\tfrac{\pi _{\phi }\pi _{\psi }}{Z}\right) = 0.
\]
That is,
\[
\log \left( Z\right) = \sum _{a}\pi _{\phi }(a)\log \pi _{\psi }(a).
\]
Equivalently,
\[
\log\!\left(\sum _{a}\pi _{\phi }(a)\pi _{\psi }(a)\right) 
= \sum _{a}\pi _{\phi }(a)\log \pi _{\psi }(a),
\]
which can be rewritten as
\[
\log \Big( \mathbb{E}_{\pi_\phi}[\pi_\psi]\Big) 
= \mathbb{E}_{\pi_\phi}[\log \pi_\psi].
\]

By Jensen's inequality, since $\log$ is concave, we have
\[
\mathbb{E}_{\pi_\phi}[\log \pi_\psi] \;\leq\; \log \Big(\mathbb{E}_{\pi_\phi}[\pi_\psi]\Big),
\]
with equality if and only if $\pi_\psi(a)$ is constant almost surely under $\pi_\phi$, i.e., if $\pi_\psi$ is a random (uniform) policy. This contradicts the assumption that $\pi_\psi$ is not a random policy.

Therefore, the KL divergence cannot be zero, and hence $\pi_\phi$ and $\tfrac{1}{Z}\pi_\psi\pi_\phi$ are not invariant policies.
\end{proof}

\section{Analysis of other choices of policy fusion}
\label{sec:other_choice}
In this section, we focus on analyzing two common fusion methods: the weighted average and product fusion methods through the lens of personalisation. The fusion strategy should comply with two constraints as defined in the main paper. Firstly, the fused policy should act on the common support of the policies being fused and secondly, satisfy the invariability constraint. 

The former method is defined as $\tfrac{1}{Z}\alpha\pi_1 + (1-\alpha)\pi_2$, where $\pi_1$ and $\pi_2$ are the policies to be fused, $Z$ is normalising factor and $\alpha$ is a weight parameter. This method satisfies the invariability property, ensuring that the fused policy remains unchanged when both input policies are similar. However, this fusion strategy acts on the union of the support of the input policies, meaning that actions are selected to maximize either one objective and not both simultaneously.

The latter method involves taking the product of the two policies: $\tfrac{1}{Z}\pi_f(a|s) = \pi_1(a|s) \times \pi_2(a|s)$. While this fusion operates on the intersection of the support of the input policies, ensuring that actions are selected to maximize both objectives simultaneously, it fails to satisfy the invariability property as shown in Lemma \ref{lm:invariability},

However, the fused policies produced by the product can be bounded as captured by the following theorem,

\begin{theorem}
    Let $Q$ and $Q'$ be Q-values corresponding to the task policy and the intent specific policy respectively. Let $\pi_\psi(a|s)$ and $\pi_\phi(a|s)$ represent the respective policies, with corresponding temperatures $T_\psi$ and $T_\phi$. Let $\norm{Q(s,a) - Q'(s,a)}_2 < \epsilon \ \forall s\in \mathcal{S} \ and \ a\in\mathcal{A}$  and $\norm{T_\psi-T_\phi}_2 < \delta$. Then, 
    \begin{center}
        $KL\left(\pi_\phi(a|s) \middle\| \tfrac{\pi_\phi(a|s)\pi_\psi(a|s)}{Z}\right) \leq\log \left( Z\right) + 
            \left(\dfrac{Q^{*}\delta + \epsilon T_\phi}{T_\phi T_\psi} \right)
            +\log\left(\zeta\right) \ \forall a\in\mathcal{A}, s\in\mathcal{S}$,
    \end{center}
    where $Z$ is its normalising factor of $\pi_f(a|s)$, $Q^{*} = max_{a \in \mathcal{A}}Q(s, a)$ and $\zeta = \dfrac{h(Q',T_\psi)}{h(Q,T_\phi)}$, here $h(Q,T) = \sum_aexp\left(\dfrac{Q}{T}\right)$.
\end{theorem}
\begin{proof}
    \begin{align*}
        &KL\left(\pi_\phi \middle\| \pi_f\right)=
        \sum _{a}\pi _{\phi }\log \left( \dfrac{\pi_\phi Z}{\pi_f}\right) \\
        &=\log \left( Z\right) + \sum _{a}\pi_\phi
        \left(\log\left(e^{\dfrac{Q}{T_\phi}-\dfrac{Q'}{T_\psi}}\right) +\log\left(\zeta\right)\right) \\
        &=\log \left( Z\right) + \sum _{a}\pi_\phi
        \left(\dfrac{Q}{T_\phi}-\dfrac{Q'}{T_\psi} +\log\left(\zeta\right)\right) \\
        &=\log \left( Z\right) + \sum _{a}\pi_\phi
        \left(\dfrac{Q}{T_\phi}-\dfrac{Q'}{T_\psi} \right)
        +\log\left(\zeta\right) \\
        &=\log \left( Z\right) + \sum _{a}\pi_\phi
        \left(\dfrac{QT_\psi-Q'T_\phi}{T_\phi T_\psi} \right)
        +\log\left(\zeta\right) \\
        &=\log \left( Z\right) + \sum _{a}\pi_\phi
        \left(\dfrac{QT_\psi-Q'T_\phi +QT_\phi - QT_\phi}{T_\phi T_\psi} \right) \\
        & \qquad +\log\left(\zeta\right) \\
        &\leq\log \left( Z\right) + \sum _{a}\pi_\phi
        \left(\dfrac{Q\delta + \epsilon T_\phi}{T_\phi T_\psi} \right) \\
        & \qquad +\log\left(\zeta\right)\\
        &\leq\log \left( Z\right) + 
        \left(\dfrac{Q^{*}\delta + \epsilon T_\phi}{T_\phi T_\psi} \right)
        +\log\left(\zeta\right)
    \end{align*}
\end{proof}

\section{MORL baseline Implementation Details}
\label{sec:morl_impl}
 To implement the MORL baseline, we construct the reward as a convex sum of the rewards of both objectives as $r = \alpha r_e + (1-\alpha)r_h$, where $r_e$ is the reward obtained from the environment and the $r_h$ is the reward obtained from the LSTM as in Equation \eqref{eq:reward_redistribution}. To match with our zero-shot approach, we change the reward of the transitions of the trajectories saved for the training of LSTM. We then populated the new transitions in the replay buffer and trained a new DQN. We choose $\alpha = 0.5$ in all our experiments to balance both objective and normalised the human reward components between $[-1,1]$. 

\begin{table*}[h]
\caption{\label{tab:2d_static_additional} Results of static fusion when $T_\psi = T_{\min}$ in 2D Navigation. $\uparrow$ indicates higher values are better and $\downarrow$ indicates lower values are better. Results averaged over 10 seeds, each with 300 episodes. Hyperparameters: $T_{\phi} = 0.4$, $T_{\min} = 1$, $T_{\max} = 10$, $\eta = 0$.}
\centering
\begin{tabular}{l l c c c}
\toprule
Mode & Method & \begin{tabular}{@{}c@{}}Desired region \\ visits $\uparrow$\end{tabular} & \begin{tabular}{@{}c@{}}Undesired region \\ visits $\downarrow$\end{tabular} & \begin{tabular}{@{}c@{}}Score \\ $\uparrow$\end{tabular} \\
\midrule
Preference & Static & $4.987 \pm 0.649$ & -- & $0.284 \pm 0.045$ \\
Mixed      & Static & $2.372 \pm 0.123$ & $0.000 \pm 0.000$ & $0.739 \pm 0.014$ \\
Avoidance  & Static & -- & $0.000 \pm 0.000$ & $0.974 \pm 0.011$ \\
\bottomrule
\end{tabular}
\end{table*}

\begin{table*}[h]
\caption{\label{tab:highway_additional} Results in the Highway environment with different values of $\alpha$. $\uparrow$ indicates higher values are better and $\downarrow$ indicates lower values are better. Hyperparameters: $T_{\phi} = 0.6$, $T_{\max} = 5$, $T_{\min} = 0.3$, $\eta = 0$.}
\centering
\begin{tabular}{l c c c c c}
\toprule
Mode & $\alpha$ & \begin{tabular}{@{}c@{}}Desired lane \\ visits $\uparrow$\end{tabular} & \begin{tabular}{@{}c@{}}Undesired lane \\ visits $\downarrow$\end{tabular} & \begin{tabular}{@{}c@{}}Hits \\ $\downarrow$\end{tabular} & \begin{tabular}{@{}c@{}}Score \\ $\uparrow$\end{tabular} \\
\midrule
\multirow{3}{*}{Avoidance} 
& 0.3 & -- & $16.25 \pm 3.45$ & $0.15 \pm 0.03$ & $37.95 \pm 0.68$ \\
& 0.5 & -- & $5.80 \pm 1.07$  & $0.14 \pm 0.23$ & $37.44 \pm 0.52$ \\
& 0.7 & -- & $2.75 \pm 0.49$  & $0.47 \pm 0.02$ & $33.90 \pm 0.91$ \\
\midrule
\multirow{3}{*}{Preference} 
& 0.3 & $9.64 \pm 0.79$  & -- & $0.15 \pm 0.02$ & $37.58 \pm 0.60$ \\
& 0.5 & $11.58 \pm 1.12$ & -- & $0.07 \pm 0.06$ & $38.14 \pm 0.30$ \\
& 0.7 & $23.74 \pm 1.58$ & -- & $0.09 \pm 0.01$ & $38.42 \pm 0.49$ \\
\midrule
\multirow{3}{*}{Mixed} 
& 0.3 & $11.90 \pm 0.93$ & $11.45 \pm 1.27$ & $0.13 \pm 0.02$ & $38.03 \pm 0.55$ \\
& 0.5 & $12.61 \pm 1.20$ & $12.36 \pm 1.58$ & $0.07 \pm 0.01$ & $36.97 \pm 0.59$ \\
& 0.7 & $23.18 \pm 2.55$ & $6.13 \pm 1.47$  & $0.11 \pm 0.03$ & $37.64 \pm 0.51$ \\
\bottomrule
\end{tabular}
\end{table*}

\section{Additional experiments}
\label{sec:additional_exp}

In this section, we report a set of supplementary experiments designed to provide further insights into the behaviour of our method and baselines.  

Firstly, we examined the effect of fixing the LSTM temperature ($T_\psi$) at its minimum value $T_{\min}$ in the 2D Navigation environment. As shown in Table~\ref{tab:2d_static_additional}, the fused policy becomes dominated by the human objective. This is expected, since a low temperature sharpens the Boltzmann distribution, amplifying the influence of the intent-specific policy relative to the task policy.  

Secondly, we analysed the sensitivity of the MORL baseline to different reward weightings ($\alpha$). Because MORL optimises multiple objectives simultaneously, the choice of $\alpha$ strongly affects the trade-off between task and human rewards. In the Highway environment, we tested $\alpha \in \{0.3, 0.5, 0.7\}$. As reported in Table~\ref{tab:highway_additional}, increasing $\alpha$ shifts the policy more towards the human objective, confirming the intuitive role of weighting in MORL.  

\noindent\textbf{Effect of $\eta$:}  
Table~\ref{tab:highway_eta} presents results for different values of $\eta$ under dynamic fusion. Theoretically, a higher $\eta$ increases the likelihood that $T_{\psi}$ remains low (see Fig~\ref{fig:sigmoid}), making the intent-specific policy sharper and more dominant during fusion. The empirical results show a trend consistent with this: in the Preference mode, higher $\eta$ is associated with more visits to the desired lane and fewer to the undesired lane. However, the changes across $\eta$ are relatively modest, and task scores remain broadly similar. In our main experiments we set $\eta=0$, though in other environments an alternative choice may be preferable.  

\noindent\textbf{Effect of $T_{\max}$:} 
We also investigated the influence of $T_{\max}$, which controls the maximum temperature in Equation~\eqref{eq:temp}. A larger $T_{\max}$ should, in principle, weaken the effect of the intent-specific policy. The results in Table~\ref{tab:t_max} are consistent with this expectation: as $T_{\max}$ increases, the agent shows a reduced tendency to follow the preferred lane and a greater tendency to deviate into undesired lanes. In other words, the level of adherence to human preferences decreases as $T_{\max}$ grows.   

Overall, these additional studies confirm the theoretical roles of $T_\psi$, $\alpha$, $\eta$, and $T_{\max}$ in shaping behaviour, while also showing that the practical impact of $\eta$ and $T_{\max}$ on performance is limited in the tested settings.

\begin{table*}[ht]
\caption{\label{tab:highway_eta} Results in the Highway environment with different values of $\eta$. $\uparrow$ indicates higher values are better and  $\downarrow$ indicates lower values are better for the quantity specified in the column. The hyperparameters are $T_{\phi} = 0.6$, $T_{\max} = 5$, $T_{\min} = 0.3$.}
\centering
\begin{tabular}{l c c c c c}
\toprule
Mode & $\eta$ & \begin{tabular}{@{}c@{}}Desired lane \\ visits $\uparrow$\end{tabular} & \begin{tabular}{@{}c@{}}Undesired lane \\ visits $\downarrow$\end{tabular} & \begin{tabular}{@{}c@{}}Hits \\ $\downarrow$\end{tabular} & \begin{tabular}{@{}c@{}}Score \\ $\uparrow$\end{tabular} \\
\midrule
\multirow{3}{*}{Avoidance} 
& 0 & -- & $0.28 \pm 0.08$ & $0.06 \pm 0.01$ & $39.83 \pm 0.59$ \\
& 1 & -- & $0.35 \pm 0.12$ & $0.08 \pm 0.02$ & $38.96 \pm 0.87$ \\
& 2 & -- & $0.19 \pm 0.11$ & $0.09 \pm 0.02$ & $38.80 \pm 0.51$ \\
\midrule
\multirow{3}{*}{Preference} 
& 0 & $25.14 \pm 1.44$ & -- & $0.05 \pm 0.02$ & $40.02 \pm 0.61$ \\
& 1 & $24.90 \pm 1.52$ & -- & $0.10 \pm 0.03$ & $38.84 \pm 0.84$ \\
& 2 & $29.91 \pm 1.38$ & -- & $0.06 \pm 0.01$ & $39.27 \pm 0.59$ \\
\midrule
\multirow{3}{*}{Mixed} 
& 0 & $21.52 \pm 2.00$ & $0.85 \pm 0.33$ & $0.12 \pm 0.01$ & $38.50 \pm 0.43$ \\
& 1 & $23.96 \pm 1.67$ & $0.56 \pm 0.13$ & $0.07 \pm 0.01$ & $39.56 \pm 0.45$ \\
& 2 & $25.21 \pm 2.55$ & $0.27 \pm 0.06$ & $0.07 \pm 0.04$ & $39.15 \pm 0.49$ \\
\bottomrule
\end{tabular}
\end{table*}

\begin{table*}[h]
\caption{\label{tab:t_max} Sensitivity analysis of $T_{\max}$ in the Highway environment (Mixed mode). The hyperparameters are $T_{\phi} = 0.6$, $T_{\min} = 0.3$, and $\eta=0$. $\uparrow$ indicates higher values are better and $\downarrow$ indicates lower values are better.}
\centering
\begin{tabular}{c c c c c}
\toprule
$T_{\max}$ & \begin{tabular}{@{}c@{}}Desired lane \\ visits $\uparrow$\end{tabular} & \begin{tabular}{@{}c@{}}Undesired lane \\ visits $\downarrow$\end{tabular} & \begin{tabular}{@{}c@{}}Hits \\ $\downarrow$\end{tabular} & \begin{tabular}{@{}c@{}}Score \\ $\uparrow$\end{tabular} \\
\midrule
10  & $19.21 \pm 1.42$ & $2.02 \pm 0.43$ & $0.06 \pm 0.02$ & $40.12 \pm 0.59$ \\
15  & $17.60 \pm 0.90$ & $3.42 \pm 0.43$ & $0.09 \pm 0.02$ & $39.64 \pm 0.45$ \\
25  & $9.53 \pm 0.82$  & $4.65 \pm 0.52$ & $0.10 \pm 0.02$ & $39.51 \pm 0.53$ \\
35  & $14.36 \pm 0.65$ & $5.32 \pm 0.58$ & $0.10 \pm 0.02$ & $40.03 \pm 0.59$ \\
\bottomrule
\end{tabular}
\end{table*}

\section{DQN Training}

DQN was utilised to obtain the task-specific policy. For the 2D navigation environment, which involves image-based observations, we adopted the Convolutional Neural Network (CNN) network as detailed in Table \ref{tab:2d_dqn}. In the case of the Pong and Highway environments, an MLP was implemented, featuring three hidden layers with sizes of 256, 512, and 512. We use the ReLU activation function after each layer. We set the minimum epsilon at 0.10, with a decay rate of 0.995. 

\begin{table*}[h]
\caption{\label{tab:2d_dqn} CNN architecture of the DQN agent.}
\centering
\begin{tabular}{l l}
\toprule
\textbf{Particulars} & \textbf{Value} \\
\midrule
Channels        & 32, 64, 64 \\
Kernel sizes    & 11, 5, 3 \\
Strides         & 2, 2, 1 \\
Paddings        & 0, 0, 0 \\
FC hidden layer & 512 \\
\bottomrule
\end{tabular}
\end{table*}



\end{appendices}
\clearpage

\end{document}